\def\eqref#1{equation~\ref{#1}}
\def\1{\bm{1}}
\DeclareMathAlphabet{\mathsfit}{\encodingdefault}{\sfdefault}{m}{sl}
\SetMathAlphabet{\mathsfit}{bold}{\encodingdefault}{\sfdefault}{bx}{n}
\def\gA{{\mathcal{A}}}
\def\gD{{\mathcal{D}}}
\def\gH{{\mathcal{H}}}
\def\gL{{\mathcal{L}}}
\def\gP{{\mathcal{P}}}
\def\gS{{\mathcal{S}}}
\def\sR{{\mathbb{R}}}
\newcommand{\E}{\mathbb{E}}
\newtheorem{lemma}{Lemma}
\newtheorem{proposition}{Proposition}
\definecolor{ao}{rgb}{0.0, 0.5, 0.0}
\definecolor{LightCyan}{rgb}{0.88,1,1} 
\definecolor{Lightpurple}{rgb}{0.9,0.9,1}
\newcommand{\leqnomode}{\tagsleft@true}
\newcommand{\reqnomode}{\tagsleft@false}
\numberwithin{equation}{section}
\newsavebox\myboxA
\newsavebox\myboxB
\newlength\mylenA
\newcommand*\xbar[2][0.75]{%
    \sbox{\myboxA}{$\m@th#2$}%
    \setbox\myboxB\null
    \ht\myboxB=\ht\myboxA%
    \dp\myboxB=\dp\myboxA%
    \wd\myboxB=#1\wd\myboxA
    \sbox\myboxB{$\m@th\overline{\copy\myboxB}$}
    \setlength\mylenA{\the\wd\myboxA}
    \addtolength\mylenA{-\the\wd\myboxB}%
    \ifdim\wd\myboxB<\wd\myboxA%
       \rlap{\hskip 0.5\mylenA\usebox\myboxB}{\usebox\myboxA}%
    \else
        \hskip -0.5\mylenA\rlap{\usebox\myboxA}{\hskip 0.5\mylenA\usebox\myboxB}%
    \fi}
\def\thanks#1{\protected@xdef\@thanks{\@thanks
        \protect\footnotetext{#1}}}
\title{On Entropy Control in LLM-RL Algorithms}
\author{Han Shen \\
Ant Group\\
\texttt{shenhanhs@gmail.com}
}
\begin{document}

\maketitle

\begin{abstract}
For RL algorithms, appropriate entropy control is crucial to their effectiveness. 
To control the policy entropy, a commonly used method is entropy regularization, which is adopted in various popular RL algorithms including PPO, SAC and A3C. 
Although entropy regularization proves effective in robotic and games RL conventionally, studies found that it gives weak to no gains in LLM-RL training.
In this work, we study the issues of entropy bonus in LLM-RL setting.
Specifically, we first argue that the conventional entropy regularization suffers from the LLM's extremely large response space and the sparsity of the optimal outputs.
As a remedy, we propose AEnt, an entropy control method that utilizes a new clamped entropy bonus with an automatically adjusted coefficient. 
The clamped entropy is evaluated with the re-normalized policy defined on certain smaller token space, which encourages exploration within a more compact response set. 
In addition, the algorithm automatically adjusts entropy coefficient according to the clamped entropy value, effectively controlling the entropy-induced bias while leveraging the entropy's benefits.
AEnt is tested in math-reasoning tasks under different base models and datasets, and it is observed that AEnt outperforms the baselines consistently across multiple benchmarks.
\end{abstract}

\section{Introduction}
RL seeks to maximize the reward received by a sequential decision making system. In recent years, RL has proven to be an effective tool for training LLMs \citep{yang2025qwen3,deepseekai2025r1,comanici2025gemini}. The advances of LLMs in math, coding and planning tasks has been astonishing, with their performance on competitive benchmarks drastically increasing after RL training.

The methods used in LLM-RL are predominantly policy-gradient based, e.g., the PPO \citep{schulman2017proximal} family. Policy gradient based methods reinforce the sampled actions that lead to higher rewards compared to other sampled actions. However, when the optimal actions are not sampled, the policy gradient methods can over-reinforce the sampled locally optimal actions, ultimately resulting in the policy stuck at suboptimal points \citep{agarwal2021theory}. The sub-optimal actions can be meaningless in deep RL and oftentimes have a large performance gap from the optimal ones \citep{henderson2018deep}, e.g., in LLM tasks, the policy can be stuck at producing the correct format but incorrect results. A straightforward remedy for the issue was the so-called \textit{entropy-regularized RL} methods \citep{williams1991function}, where the policy maximizes a sum of rewards and some \textit{entropy bonus} (policy randomness). This technique was commonly used in policy-gradient methods including A3C \citep{mnih2016asynchronous}, PPO \citep{schulman2017proximal} and SAC \citep{haarnoja2018soft}, providing strong benefits in tasks requiring hierarchical behaviors. Intuitively, the entropy bonus keeps the policy random and explorative, thus prevents the policy from over-reinforcing certain actions and getting stuck. Moreover, entropy regularization is shown to provide strong optimization benefits both empirically \citep{ahmed2019understanding} and theoretically \citep{mei2020global,klein2023beyond}. 

However, it is observed that entropy regularization offers little gains in LLM-RL training. Specifically, the experimental results to be shown in Section \ref{sec:experiments} suggest that entropy-regularized GRPO yields minimal gain compared to basic GRPO. In addition, \cite{cui2025entropy} observes that the validation accuracy is unchanged under different scaling of the entropy bonus in some LLM-math tasks.  These results are particularly underwhelming compared to those in other deep RL tasks including robotics and games, where the benefit of entropy bonus is significant (see, e.g., \citep[Figure 3]{haarnoja2018soft}). Moreover, such empirical contradiction also indicates a theoretical gap between the existing analysis which justifies the entropy's benefit \citep{mei2020global} and its effect in LLM-RL.
Therefore, a careful study and a remedy for this issue is in dire needs, as the potential gain from entropy bonus is yet to be unlocked for LLM training.

In this work, we first give a theoretical view of the entropy effect in LLM-RL training, which explains the conventional entropy bonus's emergent issues in LLM tasks. To that end, we then propose AEnt, an entropy regularization method that uses an adaptive and clamped entropy bonus. Our main contribution is twofold:
\begin{itemize}
    \item \textbf{A theory on the entropy effect and its issues in LLM-RL.} Under no entropy bonus, we show that entropy collapse indicates learning stagnancy and give a performance bound. Then we show that entropy regularization can fail to improve this result under LLM's large response space and the task's sparse optimality.
    \item \textbf{AEnt, a recipe to enable effective entropy regularization.} Inspired by the theoretical analysis, we then propose a recipe for this issue. Instead of using the traditional entropy bonus, AEnt uses a clamped entropy defined with the re-normalized LLM policy on a size-reduced token space. The clamped entropy only smooths out policy on the reasonable responses set, which enjoys decreased bias compared to the original entropy. Furthermore, the clamped entropy bonus is scaled with a coefficient that gets automatically adjusted to balance its bias and benefits. Empirical evidence suggests that AEnt consistently improves over the baselines across multiple benchmarks.
\end{itemize}

\subsection{Related works}
\textbf{Policy-gradient based LLM-RL algorithms.} The RL algorithms used in LLM post-training have been predominantly policy-gradient based \citep{sutton1999policy}. They are either based on PPO \citep{schulman2017proximal} (see, e.g., GRPO \citep{shao2024deepseekmath}, DAPO \citep{yu2025dapo} and \citep{fu2025areal}), or the more basic REINFORCE algorithms \citep{williams1992simple} (see, e.g., \citep{ahmadian2024back,chu2025gpg}). Though PPO was initially proposed in actor-critic style, the critic is replaced with Monte-Carlo rollout in resource-limited or outcome-driven LLM training scenarios. Contrary to the practice in robotic and games RL \citep{mnih2016asynchronous,schulman2017proximal}, the fore-mentioned LLM-RL algorithms do not consider entropy regularization.

\textbf{Entropy regularization in RL.} Entropy-regularized RL was initially introduced in \citep{williams1991function}. It has been commonly used in various popular policy-based deep RL algorithms \citep{mnih2016asynchronous,schulman2017proximal,haarnoja2018soft} which have provided ample empirical evidence for its effectiveness in robotic and games tasks. Entropy regularization's optimization benefits have also been empirically \citep{ahmed2019understanding} and theoretically \citep{mei2020global,klein2023beyond} studied. However, it does not give notable performance gains for LLMs (see, e.g., Section \ref{sec:experiments} and \citep{cui2025entropy}). As a result, alternative entropy control techniques are often adopted. In \citep{zhang2024entropy} reshapes the reward function to regulate the policy. Or in a concurrent work \citep{cui2025entropy}, the algorithm clips or regulates the parts the policy update that decrease entropy too much. To our best knowledge, existing works do not answer the question of why and when entropy regularization can fail in LLM-RL, and have not uncovered the potential benefits of entropy bonus.


\section{Preliminaries}\label{sec:preliminary}
In this section, we will first give formal definitions of some RL concepts, and then introduce several prominent policy optimization algorithms.

\textbf{Finite-horizon Markov decision process (MDP).} In LLM-RL setting, the learning task can be modeled as a finite-horizon MDP defined by a $\mathcal{M}=\{\gS,\gA,\gP,r,H\}$, where $\gS$ is a finite state space (e.g., input token sequence of the LLM), $\gA$ is a finite action space (e.g., LLM's vocabulary), and the state transits by $s_{t+1}=\gP(s_t,a_t)$ where $\gP$ is a concatenation operation of the input sequence $s_t$ and the output token $a_t$. Function $r(s,a)\in[0,1]$ assigns a reward to $(s,a)$. Horizon $H$ is the max response length. An LLM-policy parameterized by $\theta\in\sR^d$ is denoted as $\pi_\theta(a|s)$, which assigns a probability for each token $a\in\gA$ given input $s\in\gS$.

\textbf{RL objectives.}
Given the initial time step $h$ and state $s_h=s$, define the cumulative reward as
\begin{align}\label{eq:V function}
    V_h^{\pi_\theta}(s) \coloneqq \E_{\pi_\theta}\Big[ \sum_{t=h}^{H-1} r(s_t,a_t)|s_h=s\Big]
\end{align}
where $\pi_\theta(s_t)\coloneqq \pi_\theta(\cdot|s_t)$, the expectation is taken over the trajectory $(a_t,s_{t+1},\dots,a_{H-1})$ where $a_t\sim\pi_\theta(s_t)$ for each $t$. 
Given a dataset $\gD$ containing input queries, the objective of RL is
\begin{align}\label{eq:RL objective}
    \max_\theta V^{\pi_\theta}(\gD)\coloneqq \E_{s\sim\gD}[V^{\pi_\theta}(s)] = \E_{\pi_\theta}\Big[ \sum_{t=0}^{H-1} r(s_t,a_t)\Big]
\end{align}
where $V^{\pi_\theta}(s)=V_0^{\pi_\theta}(s)$, and we omit time step subscripts for the value functions of step $0$.

\textbf{Entropy regularized RL.}
Given $\gD$, we can define the entropy of the policy $\pi_\theta$ as
\begin{align}\label{eq:ent}
    \gH (\pi_\theta) \coloneqq -\E_{\pi_\theta}\Big[\sum_{t=0}^{H-1}\log\pi_\theta(a_t|s_t)\Big]
\end{align}
Entropy regularized RL aims to maximize the entropy regularized objective $V_\lambda^{\pi_\theta}(\gD) \coloneqq  V^{\pi_\theta}(\gD)+\lambda \gH (\pi_\theta)$.
Due to space limitation, we defer some definitions to Appendix \ref{appendix:notations}.

\textbf{PPO-clip family.} To solve for \ref{eq:RL objective}, a prominent algorithm is the PPO-clip \citep{schulman2017proximal}. Given the sampling policy $\pi_{\rm b}$, the objectives of PPO-clip algorithms can be written as
\begin{align}
\text{\small
    $\gL_{\rm PO}(\theta) \!=\! \E_{s_0\sim\gD,\{a_t\sim\pi_{\rm b}(s_t)\}_{t\leq H-1}}\!\Big[\min\Big(\frac{\pi_\theta(a_t|s_t)}{\pi_{\rm b}(a_t|s_t)}\hat{A}_t,{\rm clip}\Big(\frac{\pi_\theta(a_t|s_t)}{\pi_{\rm b}(a_t|s_t)},1\!-\!\epsilon_{\rm low},1\!+\!\epsilon_{\rm high}\!\Big)\!\hat{A}_t\Big)\Big]$}
\end{align}
where $\hat{A}_t$ is an estimate of the advantage function.
GRPO uses a Monte-Carlo estimate of the trajectory-level advantage. DAPO additionally decouples the clip ratio by setting different $\epsilon_{\rm low},\epsilon_{\rm high}$ and incorporates extra sampling constraints and overlong response penalty.
Given a suitable clip range, the PPO-clip algorithm can be viewed as a policy gradient algorithm \citep{jin2023stationary}.


\section{A theory on entropy effect in policy gradient based LLM-RL}\label{sec:entropy effect}
In this section, we give some theoretical insights into LLM-RL training. We will show performance bounds for RL algorithms without entropy control or with conventional entropy control. We will also draw connections with some concurrent works based on our theoretical insights.


Suppose the LLM is a softmax policy, that is
\begin{align}
    \pi_\theta(a|s) = \frac{\exp(\theta_{s,a})}{\sum_a \exp(\theta_{s,a})} \nonumber
\end{align}
where $\theta_{s,a}$ is the logit of token $a$ given input $s$.
The policy gradient based algorithms without entropy regularization, including PPO-clip, are generally guaranteed to converge to an $\epsilon$-stationary point of the RL objective $V^{\pi_\theta}(\gD)$ satisfying $\|\nabla V^{\pi_\theta}(\gD)\|\leq\epsilon$ \citep{agarwal2021theory,jin2023stationary}.
When doing policy optimization without regularization, \citep{cui2025entropy} observes that the policy entropy quickly diminishes as performance increases, and ultimately performance saturates when entropy completely collapses.
In the following result, we give some theoretical insights into this observation.

\begin{proposition}[Bounds under no entropy control]\label{prop:entropy insight}
    Assume the policy is a softmax. We have:
    \begin{enumerate}[label=\upshape(\Roman*)]
        \item  Policy entropy is an upper bound of the policy gradient: $\big\|\nabla V^{\pi_\theta}(\gD)\big\|\leq 2 \gH(\pi_\theta) $. 
        \label{prop:entropy insight b1}
        
        \item If $\big\|\nabla V^{\pi_\theta}(\gD)\big\|\leq\epsilon$, then given any query $s_0$ in dataset $\gD$, the policy suboptimality on the query satisfies
        \begin{equation}
            V^{\pi^*}(s_0)-V^{\pi_\theta}(s_0) \leq \frac{1}{C^{\pi_\theta}(s_0)}\epsilon
        \end{equation}
        where $\pi^*\!\in\!\arg\max_{\pi} V^{\pi}(\gD)$, $C^{\pi_\theta}(s_0) \coloneqq \frac{1}{\sqrt{H}|\gD|}\max_{(a_0,\dots,a_{H-1})\in\gA_H^*(s_0)}\Pi_{t=0}^{H-1}\pi_\theta(a_t|s_t)$ in which $\gA_H^*(s_0)=\{(a_0,a_1,\dots,a_{H-1})\in\gA^H~|~\exists \pi^*, \Pi_{t=0}^{H-1}\pi^*(a_t|s_t)>0\}$ is the set of all optimal responses given query $s_0$.
        \label{prop:entropy insight b2}
    \end{enumerate}
\end{proposition}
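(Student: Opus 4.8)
The plan is to treat the two parts separately, both resting on the softmax policy-gradient identity $\partial V^{\pi_\theta}(\gD)/\partial\theta_{s,a} = d^{\pi_\theta}(s)\,\pi_\theta(a|s)\,A^{\pi_\theta}(s,a)$, where $A^{\pi_\theta}$ is the advantage function and $d^{\pi_\theta}(s):=\frac{1}{|\gD|}\sum_{s_0\in\gD}\sum_{t=0}^{H-1}\Pr^{\pi_\theta}(s_t=s\mid s_0)$ is the (unnormalized) state-occupancy measure; this follows from the policy-gradient theorem together with the softmax score $\partial\log\pi_\theta(a|s)/\partial\theta_{s',a'}=\1[s=s']\big(\1[a=a']-\pi_\theta(a'|s)\big)$ and the identity $\sum_a\pi_\theta(a|s)A^{\pi_\theta}(s,a)=0$. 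I will also use the decomposition $\gH(\pi_\theta)=\sum_s d^{\pi_\theta}(s)\,h(\pi_\theta(\cdot|s))$, where $h(p):=-\sum_a p(a)\log p(a)$ is the per-state Shannon entropy, which follows from \eqref{eq:ent} by linearity of expectation.

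For part (I), I would bound $\|\nabla V^{\pi_\theta}(\gD)\|\le\|\nabla V^{\pi_\theta}(\gD)\|_1=\sum_s d^{\pi_\theta}(s)\sum_a\pi_\theta(a|s)\,|A^{\pi_\theta}(s,a)|$ and then invoke the pointwise inequality $\sum_a p(a)|g(a)|\le 2\,h(p)$, valid for any distribution $p$ and any centered $g$ (meaning $\sum_a p(a)g(a)=0$) with $\|g\|_\infty\le1$, applied with $p=\pi_\theta(\cdot|s)$ and $g=A^{\pi_\theta}(s,\cdot)$ (which is centered under $\pi_\theta(\cdot|s)$, and satisfies $\|A^{\pi_\theta}\|_\infty\le1$ for outcome rewards in $[0,1]$). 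The pointwise inequality is the crux: with $m:=\tfrac12\sum_a p(a)|g(a)|$, centering gives $m=\sum_{a:g(a)>0}p(a)g(a)\le p(\{g>0\})$ and likewise $m\le p(\{g<0\})$, so $m\le\min\!\big(p(E),p(E^c)\big)$ with $E=\{g>0\}$; and $\min(p(E),p(E^c))\le h(p)$ because coarse-graining to the binary variable $\1_E$ cannot increase entropy, while $h_b(x)\ge\min(x,1-x)$ for the binary entropy $h_b$ (an elementary check using $-x\log x\ge x$ on $(0,1/e]$ and monotonicity of $h_b$ on $[0,\tfrac12]$). Summing over $s$ weighted by $d^{\pi_\theta}$ then gives $\|\nabla V^{\pi_\theta}(\gD)\|\le 2\gH(\pi_\theta)$.

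For part (II), I would fix $s_0\in\gD$ and an arbitrary optimal action sequence $\va^*=(a_0^*,\dots,a_{H-1}^*)\in\gA_H^*(s_0)$, and take a deterministic optimal policy $\pi^*$ whose (deterministic) trajectory from $s_0^*:=s_0$ is $(s_0^*,a_0^*,s_1^*,a_1^*,\dots,s_{H-1}^*,a_{H-1}^*)$; the states $s_t^*$ are pairwise distinct, being token prefixes of strictly increasing length. The performance-difference lemma then gives $V^{\pi^*}(s_0)-V^{\pi_\theta}(s_0)=\sum_{t=0}^{H-1}A^{\pi_\theta}(s_t^*,a_t^*)$. Next I would rewrite each advantage through the gradient identity as $A^{\pi_\theta}(s_t^*,a_t^*)=\big(d^{\pi_\theta}(s_t^*)\,\pi_\theta(a_t^*|s_t^*)\big)^{-1}\,\partial_{\theta_{s_t^*,a_t^*}}V^{\pi_\theta}(\gD)$ (well-defined since a softmax assigns strictly positive probabilities), lower-bound the occupancy along this path by $d^{\pi_\theta}(s_t^*)\ge\frac{1}{|\gD|}\prod_{\tau<t}\pi_\theta(a_\tau^*|s_\tau^*)$, use $\prod_{\tau\le t}\pi_\theta(a_\tau^*|s_\tau^*)\ge\prod_{\tau=0}^{H-1}\pi_\theta(a_\tau^*|s_\tau^*)$, and apply Cauchy--Schwarz over the $H$ distinct coordinates $\{(s_t^*,a_t^*)\}_{t}$, giving $\sum_t\big|\partial_{\theta_{s_t^*,a_t^*}}V^{\pi_\theta}(\gD)\big|\le\sqrt H\,\|\nabla V^{\pi_\theta}(\gD)\|\le\sqrt H\,\epsilon$. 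Chaining these yields $V^{\pi^*}(s_0)-V^{\pi_\theta}(s_0)\le\sqrt H\,|\gD|\,\big(\prod_{\tau=0}^{H-1}\pi_\theta(a_\tau^*|s_\tau^*)\big)^{-1}\epsilon$, and since this holds for every $\va^*\in\gA_H^*(s_0)$, choosing $\va^*$ to maximize $\prod_\tau\pi_\theta(a_\tau^*|s_\tau^*)$ produces exactly $\epsilon/C^{\pi_\theta}(s_0)$.

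The step I expect to be the main obstacle is the pointwise entropy inequality underlying part (I): it is the only genuinely non-mechanical ingredient, and since it is not tight on the uniform distribution the constant cannot be pushed below $1/\log 2$, which is why $2$ (rather than $1$) appears. The remaining ingredients — the occupancy lower bound along a fixed path, distinctness of the $s_t^*$, and the positivity of $\pi_\theta$ that keeps $C^{\pi_\theta}(s_0)>0$ so the bound is non-vacuous — should be routine given the softmax parametrization and the deterministic, tree-structured transition dynamics.
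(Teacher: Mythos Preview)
Your argument for part~(II) is essentially the paper's: both rest on the elementwise softmax gradient identity, the performance-difference lemma along a deterministic optimal trajectory, the occupancy formula $d^{\pi_\theta}(s_t^*)=\tfrac{1}{|\gD|}\prod_{\tau<t}\pi_\theta(a_\tau^*\!\mid\! s_\tau^*)$ (which is in fact an equality here, not just a lower bound), and Cauchy--Schwarz over the $H$ distinct coordinates $\{(s_t^*,a_t^*)\}_t$. The paper runs the chain in the other direction---starting from $\|\nabla V\|$ and lower-bounding---but the content is identical.

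Part~(I), however, takes a genuinely different route from the paper. The paper bounds the softmax Jacobian block $J_s=\mathrm{Diag}(\pi_\theta(\cdot|s))-\pi_\theta(\cdot|s)\pi_\theta(\cdot|s)^\top$ via $\|J_s\|\le\|J_s\|_F\le 2\sum_a\pi_\theta(a|s)(1-\pi_\theta(a|s))\le 2h(\pi_\theta(\cdot|s))$, the last step using the scalar inequality $1-x\le -\log x$ (i.e.\ Gini impurity $\le$ Shannon entropy), and then sums over states. Your approach instead goes through the $\ell_1$ norm and the pointwise inequality $\sum_a p(a)|g(a)|\le 2h(p)$ for centered $g$ with $\|g\|_\infty\le 1$, which you prove by coarse-graining to the binary event $\{g>0\}$ and using $h_b(x)\ge\min(x,1-x)$. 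Your route is arguably cleaner: it makes the dependence on $\|A^{\pi_\theta}\|_\infty\le 1$ explicit (and only needs the $\ell_\infty$ bound, which is the natural one for outcome rewards), whereas the paper's step $\|J_sA\|\le\|J_s\|$ leaves the advantage-norm factor implicit. The paper's Jacobian bound, on the other hand, is a reusable structural fact independent of the advantage. Both arguments share the same implicit assumption that the advantage is bounded by $1$ in sup norm; you flag this, and it is exactly what ``outcome rewards in $[0,1]$'' delivers in the LLM setting.
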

 The first bullet \ref{prop:entropy insight b1} suggests the policy entropy is an indicator of the policy stationarity, that is, a small entropy indicates a small policy gradient $\|\nabla V^{\pi_\theta}(\gD)\big\|$ and the convergence of the policy. The second bullet \ref{prop:entropy insight b2} quantifies the actual performance of the almost stationary policy, where the reward optimality gap on query $s_0$ is bounded by $\mathcal{O}(\epsilon / C^{\pi_\theta}(s_0))$. The factor $C^{\pi_\theta}(s_0)$ can be controlled (bounded away from $0$) when the initial LLM and the RL algorithm can sufficiently explore the optimal response to $s_0$. For example, one can either use a large batch size \citep{klein2023beyond} or a strong initial model \citep{weissmann2024almost} to control $C^{\pi_\theta}(s_0)$. In this case, the performance is ultimately bounded by $\mathcal{O}(\epsilon)$. The error $\epsilon$ decreases with prolonged RL training, while it usually cannot decrease to $0$ due to the presence of sampling noise or the advantage estimation error.

\begin{figure*}[t]
\centering
    \includegraphics[width=0.25\textwidth]{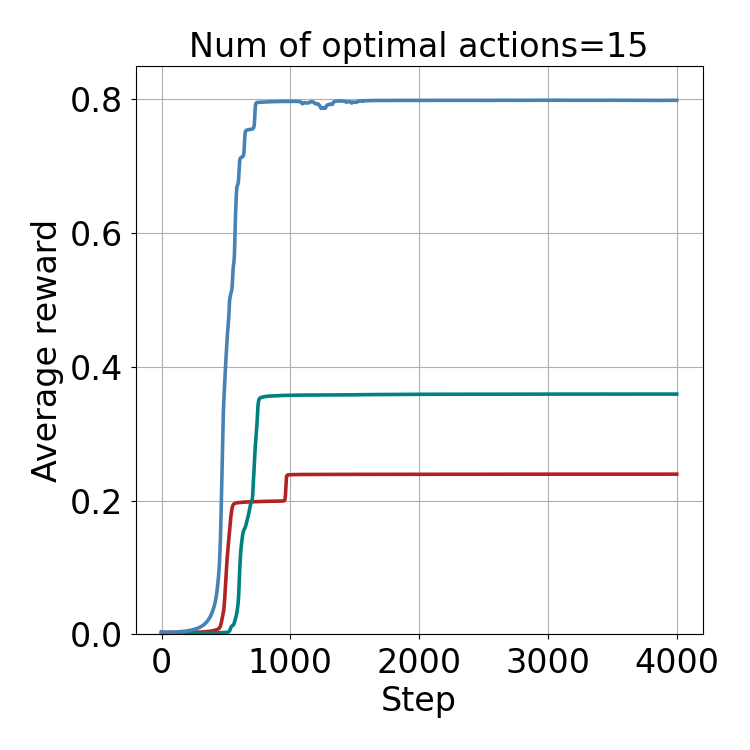}\hspace{-0.1cm}
    \includegraphics[width=0.25\textwidth]{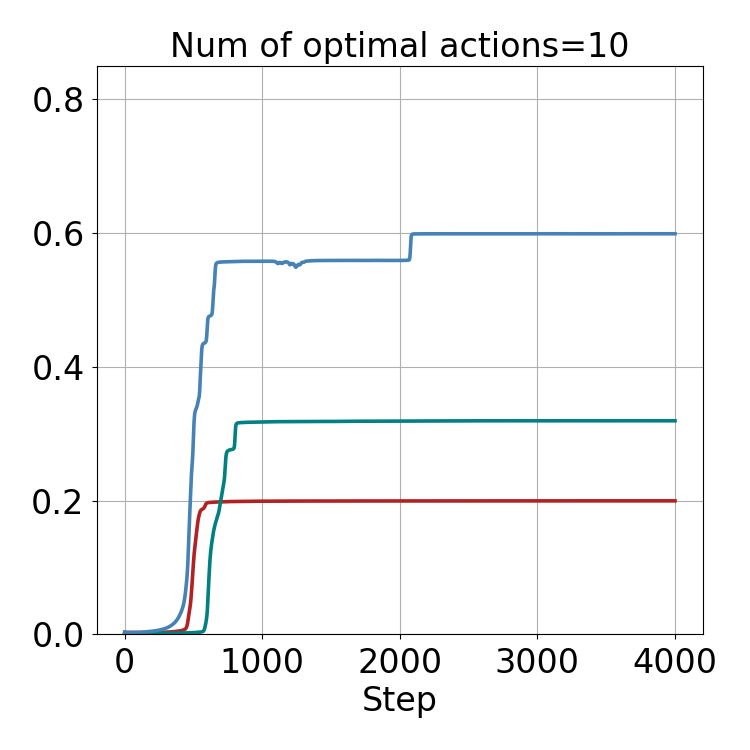}\hspace{-0.1cm}
    \includegraphics[width=0.25\textwidth]{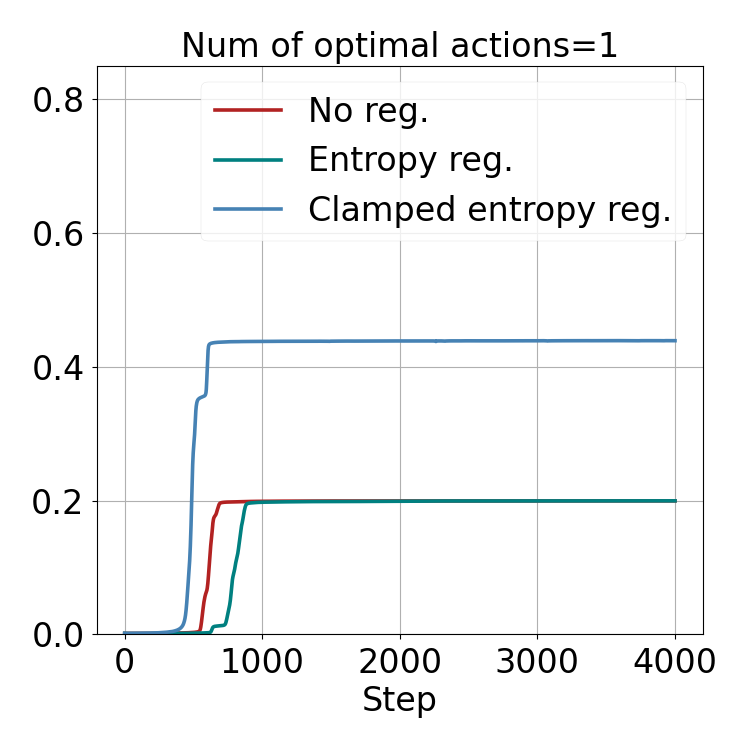}\hspace{-0.1cm}
    \includegraphics[width=0.25\textwidth]{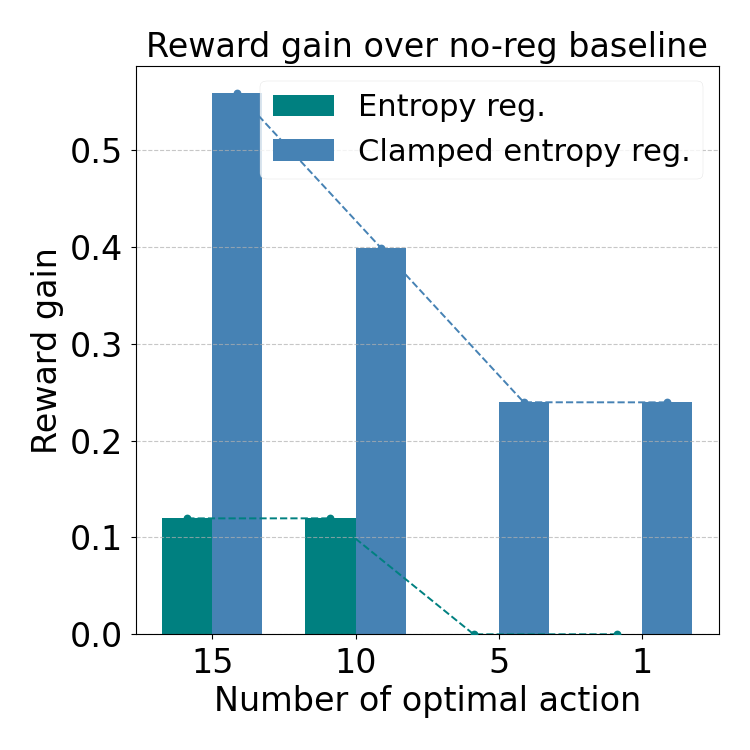}
     \caption{Test in a controlled MDP with a large action space of size $|\gA|=10^5$ and increasingly sparse optimal actions.}
    \label{fig:toy example}
    \vspace*{-0.3cm}
\end{figure*}

On the other hand, the maximum entropy RL optimizes the entropy-regularized reward sum $V_\lambda^{\pi_\theta}(\gD)$ \citep{williams1991function}. In non-LLM deep RL tasks, this method has long been popular and can significantly outperform methods without entropy control \citep{mnih2016asynchronous,haarnoja2018soft}. However, experiments (to be shown in Section \ref{sec:experiments}) show that traditional entropy regularization gives weak to no gains in LLM-RL training. In the next result, we give theoretical insight into this issue.
\begin{proposition}[Bound for entropy-regularized methods]\label{prop:max ent bound}
    Assume the policy is a softmax. If $\big\|\nabla V_\lambda^{\pi_\theta}(\gD)\big\| \leq \epsilon$, then given any query $s_0$, the policy suboptimality on the query satisfies
    \begin{equation}
        V^{\pi^*}(s_0)-V^{\pi_\theta}(s_0) \leq \frac{1}{C_\lambda^{\pi_\theta}(s_0)}\frac{\epsilon^2}{2\lambda} + \lambda H\log\frac{|\gA|}{|\gA_H^*(s_0)|^{\frac{1}{H}}}
    \end{equation}
    where $C_\lambda^{\pi_\theta}(s_0)$ will be specified in the proof.
\end{proposition}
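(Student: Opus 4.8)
The plan is to split the suboptimality into an \emph{optimization error}, controlled by the gradient norm through a Łojasiewicz-type inequality, and a \emph{regularization bias} coming from the entropy bonus, and to bound the two pieces separately. Fix a query $s_0$ and write $V_\lambda^{\pi}(s_0)=V^{\pi}(s_0)+\lambda\gH(\pi|s_0)$ for the per-query regularized value, where $\gH(\pi|s_0)=-\E_{\pi}[\sum_{t}\log\pi(a_t|s_t)\,|\,s_0]$, so that $V_\lambda^{\pi}(\gD)=\E_{s_0\sim\gD}[V_\lambda^{\pi}(s_0)]$. I would start from the identity
\begin{align*}
V^{\pi^*}(s_0)-V^{\pi_\theta}(s_0)=\big[V_\lambda^{\pi^*}(s_0)-V_\lambda^{\pi_\theta}(s_0)\big]+\lambda\gH(\pi_\theta|s_0)-\lambda\gH(\pi^*|s_0).
\end{align*}
Since $\pi^*$ enters the left-hand side only through $V^{\pi^*}(s_0)$, which is the common optimal value, I am free to take $\pi^*$ to be the optimal policy that at each node of the tree of optimal trajectories from $s_0$ branches in proportion to the number of optimal leaves below it; a telescoping computation shows this policy is uniform over $\gA_H^*(s_0)$, so $\gH(\pi^*|s_0)=\log|\gA_H^*(s_0)|$, and it is optimal because (deterministic transitions) an optimal policy emits only reward-maximizing trajectories and every such trajectory lies in the support of some optimal policy, so $\gA_H^*(s_0)$ is exactly the set of optimal trajectories and forms a prefix-closed tree. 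Combined with the trivial bound $\gH(\pi_\theta|s_0)\le H\log|\gA|$, this already gives the regularization-bias term: $\lambda\gH(\pi_\theta|s_0)-\lambda\gH(\pi^*|s_0)\le\lambda\big(H\log|\gA|-\log|\gA_H^*(s_0)|\big)=\lambda H\log\frac{|\gA|}{|\gA_H^*(s_0)|^{1/H}}$.

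It remains to bound $V_\lambda^{\pi^*}(s_0)-V_\lambda^{\pi_\theta}(s_0)$ by $\epsilon^2/(2\lambda C_\lambda^{\pi_\theta}(s_0))$. Let $\pi_\lambda^*$ be the soft-optimal policy, i.e.\ the state-wise maximizer of $V_\lambda^{\pi}$ characterized by the soft Bellman optimality equations; then $V_\lambda^{\pi^*}(s_0)\le V_\lambda^{\pi_\lambda^*}(s_0)$, so it suffices to bound the soft optimality gap $V_\lambda^{\pi_\lambda^*}(s_0)-V_\lambda^{\pi_\theta}(s_0)$. For this I would establish a non-uniform Łojasiewicz inequality for the entropy-regularized softmax objective, adapting Mei et al.\ to the finite-horizon, per-query setting: (i) the soft policy-gradient theorem expresses $\partial V_\lambda^{\pi_\theta}(\gD)/\partial\theta_{s,a}$ as $\tfrac1{|\gD|}$ times (visitation of $s$) times $\pi_\theta(a|s)\,\widetilde{A}_\lambda^{\pi_\theta}(s,a)$, with $\widetilde{A}_\lambda^{\pi_\theta}$ the soft advantage; (ii) the soft performance-difference lemma writes $V_\lambda^{\pi_\lambda^*}(s_0)-V_\lambda^{\pi_\theta}(s_0)=\lambda\sum_{h}\E_{s_h\sim d_h^{\pi_\lambda^*}(s_0)}\big[\mathrm{KL}(\pi_\lambda^*(\cdot|s_h)\,\|\,\pi_\theta(\cdot|s_h))\big]$; (iii) restricting to the logits along the optimal subtree of $s_0$, using the soft-Bellman identity $\widetilde{A}_\lambda^{\pi_\theta}(s,a)=\lambda\log\frac{\pi_\lambda^*(a|s)}{\pi_\theta(a|s)}+(\text{term independent of }a)$ to bound $\mathrm{KL}(\pi_\lambda^*(\cdot|s)\,\|\,\pi_\theta(\cdot|s))$ by a multiple of $\sum_a\pi_\theta(a|s)\,\widetilde{A}_\lambda^{\pi_\theta}(s,a)^2$, and then Cauchy–Schwarz, one lower-bounds $\|\nabla V_\lambda^{\pi_\theta}(\gD)\|^2$ by $2\lambda C_\lambda^{\pi_\theta}(s_0)\big(V_\lambda^{\pi_\lambda^*}(s_0)-V_\lambda^{\pi_\theta}(s_0)\big)$, where $C_\lambda^{\pi_\theta}(s_0)$ collects a distribution-mismatch factor, a $1/|\gD|$ from the dataset averaging, the horizon, and $\min_{\tau\text{ optimal}}\prod_t\pi_\theta(a_t|s_t)$. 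Rearranging and plugging in $\|\nabla V_\lambda^{\pi_\theta}(\gD)\|\le\epsilon$ yields the optimization-error term, and adding the two bounds finishes the proof.

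I expect step (iii) — the non-uniform Łojasiewicz inequality with an explicit, per-query constant $C_\lambda^{\pi_\theta}(s_0)$ — to be the main obstacle: it requires propagating the soft Bellman structure through the finite-horizon recursion and converting a visitation-weighted KL into a quantity comparable to the squared gradient, which is exactly where $\min_{\tau\text{ optimal}}\prod_t\pi_\theta(a_t|s_t)$ enters, so the bound degrades when the optimal responses are sparse and poorly covered by $\pi_\theta$. The remaining ingredients — the algebraic split, the choice of $\pi^*$, the soft performance-difference lemma, and the entropy estimates — are standard. It is worth remarking afterwards that minimizing the bound over $\lambda$ leaves a residual suboptimality of order $\epsilon\sqrt{H\log(|\gA|/|\gA_H^*(s_0)|^{1/H})/C_\lambda^{\pi_\theta}(s_0)}$, which exceeds the $\mathcal{O}(\epsilon)$ guarantee of Proposition~\ref{prop:entropy insight} precisely when the action space is huge and the optimal responses few — the LLM-RL regime.
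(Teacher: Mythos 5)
Your overall architecture matches the paper's: decompose the suboptimality into an entropy-induced bias plus a soft optimality gap, bound the bias by $\lambda\big(H\log|\gA|-\log|\gA_H^*(s_0)|\big)$ using the achievability of the uniform distribution over the tree of optimal trajectories together with $\gH(\pi_\theta|s_0)\le H\log|\gA|$, and control the soft gap by a non-uniform \L{}ojasiewicz inequality adapted from Mei et al. The bias half of your argument is correct and essentially identical to the paper's Lemma \ref{lemma:ent bias}.

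The gap is in your steps (ii)--(iii), which are exactly the part you flag as the main obstacle. First, the soft performance-difference identity you state has both the visitation measure and the KL direction flipped: for deterministic transitions the exact identity is $V_\lambda^{\pi_\lambda^*}(s_0)-V_\lambda^{\pi_\theta}(s_0)=\lambda\sum_t\E_{s\sim\mathbb{P}_t^{\pi_\theta}}\big[D_{\rm KL}\big(\pi_\theta(\cdot|s)\,\|\,\pi_\lambda^*(\cdot|s)\big)\big]$, not an expectation under $\pi_\lambda^*$ of $D_{\rm KL}(\pi_\lambda^*\|\pi_\theta)$. More seriously, the ``soft-Bellman identity'' $\tilde A_\lambda^{\pi_\theta}(s,a)=\lambda\log\frac{\pi_\lambda^*(a|s)}{\pi_\theta(a|s)}+c(s)$ is false away from the optimum: the soft advantage is built from $\pi_\theta$'s \emph{own} soft $Q$-function, so the correct statement is $Q_{t,\lambda}^{\pi_\theta}(s,a)-\lambda\log\pi_\theta(a|s)-V_{t,\lambda}^{\pi_\theta}(s)=\lambda\log\frac{\bar\pi_\theta(a|s,t)}{\pi_\theta(a|s)}+c(s,t)$ with the soft-greedy comparator $\bar\pi_\theta(a|s,t)\propto\exp\big(Q_{t,\lambda}^{\pi_\theta}(s,a)/\lambda\big)$, which coincides with $\pi_\lambda^*$ only when $\pi_\theta$ is already soft-optimal. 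As written, step (iii) therefore cannot convert the squared gradient into the KL you produced in step (ii). The paper avoids this by never using the exact performance-difference identity: it upper-bounds $\E_{a\sim\pi_\lambda^*}\big[Q_{t,\lambda}^{\pi_\theta}(s,a)-\lambda\log\pi_\lambda^*(a|s)\big]$ by the log-sum-exp $\lambda\log\sum_a\exp\big(Q_{t,\lambda}^{\pi_\theta}(s,a)/\lambda\big)$, which turns the gap into $\sum_t\E_{s\sim\mathbb{P}_t^{\pi_\lambda^*}}\big[\lambda D_{\rm KL}(\pi_\theta\,\|\,\bar\pi_\theta)\big]$ --- a quantity expressed entirely in $\pi_\theta$'s own soft $Q$-values and hence directly comparable to $\|\nabla V_\lambda^{\pi_\theta}(\gD)\|^2$ through the softmax Jacobian. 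With $\bar\pi_\theta$ substituted for $\pi_\lambda^*$ in (ii)--(iii), your plan goes through. Note also that the resulting constant $C_\lambda^{\pi_\theta}(s_0)$ is not the product of $\pi_\theta$ along optimal trajectories as you conjecture (that is the constant of Proposition \ref{prop:entropy insight}); it involves $\min_a\pi_\theta(a|s)$ over \emph{all} actions at each reachable state, the mismatch $(\mathbb{P}_t^{\pi_\theta}(s))^2/\mathbb{P}_t^{\pi_\lambda^*}(s|s_0)$, and a $1/|\gS(s_0)|$ factor.
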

Similar conditions to Propositions 1.\ref{prop:entropy insight b2}\&\ref{prop:max ent bound} have been derived in \citep{mei2020global} for the discounted infinite horizon MDPs, while our results hold for the finite horizon MDPs under a concatenation transition. Proposition \ref{prop:max ent bound} also provides a more accurate bound for the entropy bias.

\textbf{Entropy regularization suffers from immense response space with sparse optimality in LLM tasks.} As compared to no entropy control case in Proposition \ref{prop:entropy insight b2}, the above bound's dependence on $\epsilon$ improves to $\mathcal{O}(\epsilon^2/2\lambda)$ when $C_\lambda^{\pi_\theta}(s_0)$ is bounded away from $0$. However, this optimization benefit does not come free as a bias term is introduced. The entropy bias is $\mathcal{O}(H\log(|\gA|/|\gA_H^*(s_0)|^{\frac{1}{H}}))$, which increases with the response space size $H\log|\gA|$ and the sparsity of optimal responses $\log(1/|\gA_H^*(s_0)|)$. The bias is especially ubiquitous in LLM-RL, where the response space is typically extremely large (hundreds of thousands tokens to sample from in each step) as compared to that in, e.g., classic control and games where the action space size and horizon are typically at the hundreds \citep{brockman2016openai,silver2017mastering}. To better demonstrate this effect, we report a numerical test result in Figure \ref{fig:toy example}. The test is done in a synthetic MDP with $|\gA|=10^5$, and the optimal action is sparse with various numbers in $\{1,5,10,15\}$ (see Appendix \ref{appendix:toy example} for details). It is observed that entropy regularization leads to gains over no-regularization when number of optimal actions is $10,15$, but offers no gains when the optimal action becomes too sparse with fewer than $5$ optimal actions. 

To unlock the benefit of entropy regularization, it is crucial to mitigate the negative effect caused by the large response space in LLM tasks. In the following sections, we propose our recipe for this issue and empirically demonstrate its effectiveness.

\section{AEnt: Adaptive entropy regularization with token space clamping}\label{sec:aent}

In this section, we will first introduce the core components of our method and then the full algorithm.


\subsection{Entropy with token space clamping}

Recall the entropy regularized RL objective is $V^{\pi_\theta}(\gD)+\lambda \gH(\pi_\theta)$ where
\begin{equation}
    \gH(\pi_\theta) = -\sum_{t=0}^{H-1}\E_{s_t\sim\pi_\theta}\Big[\sum_{a\in\gA}\pi_\theta(a|s_t)\log\pi_\theta(a|s_t)\Big]. \color{Mahogany}{\tag*{\rm (Entropy)}}
\end{equation}
Entropy is maximized by the uniform policy $\pi_{\rm uniform}(a|s)=1/|\gA|$. Maximizing the entropy pulls the LLM policy towards $\pi_{\rm uniform}(a|s)=1/|\gA|$, increasing the likelihood of low-probability actions while decreasing those of the high-probability ones. Intuitively, this helps when the optimal actions have low probabilities and are thus less likely to be sampled and reinforced. Such mechanism works well in the RL tasks where the discrete action space is small \citep{brockman2016openai}. While it is extremely inefficient in LLM-RL setting since $\gA$ is prohibitively immense with sparse optimal tokens. Specifically, when $1/|\gA|$ is small, pulling $\pi_\theta(a|s)$ for every $a\in\gA$ towards $1/|\gA|$ gives weak gains and produces large bias due to the large amount of non-optimal tokens in the complete token space.


To overcome this issue, we instead use a clamped entropy:
\begin{align}
    &\tilde{\gH}(\pi_\theta) \coloneqq -\sum_{t=0}^{H-1}\E_{\color{MidnightBlue} s_t\sim\pi_b}\Big[\sum_{\color{MidnightBlue} a\in\gA(s_t)}{{\color{MidnightBlue}\tilde{\pi}_\theta(a|s_t)}}\log{\color{MidnightBlue}\tilde{\pi}_\theta(a|s_t)}\Big]  \tag*{\textcolor{MidnightBlue}{\rm (Clamped entropy)}}\\
    &{\rm with}~~{\color{MidnightBlue}\tilde{\pi}_\theta(a|s)} = \frac{\exp\big(\theta_{s,a}\big)}{\sum_{\color{MidnightBlue}a\in\gA(s)}\exp\big(\theta_{s,a}\big)}~~{
    \rm and
    }~~ {\color{MidnightBlue}{\gA(s)}}=\{\text{top $(1\!-\!p)$ percent tokens in }\pi_\theta(\cdot|s)\}\nonumber
\end{align}
The clamped entropy is evaluated by a re-normalized policy $\tilde{\pi}_\theta$ on a size-reduced, input-dependent token space $\gA(s)$. By the insights from Proposition \ref{prop:max ent bound} and Figure \ref{fig:toy example}, regularizing on a smaller response space with denser optimality generally leads to reduced bias and larger gains. With this principle, we set $\gA(s)$ as the the top-probability tokens set of $\pi_\theta(s)$. The intuition is that since the base models are pre-trained or fine-tuned prior to the RL phase, the bottom probability tokens are unlikely to be optimal. 
We find leaving them out reduces entropy-induced bias and leads to performance gains.
 It can be observed from the toy demonstration in Figure \ref{fig:toy example} that clamped entropy regularization leads to performance gains when entropy regularization does not (number of optimal actions $\leq 5$), and is generally more robust to optimality sparsity increase.


\subsection{Adaptive clamped entropy control}
\begin{wrapfigure}{l}{0.43\textwidth}
\centering
\vspace*{-0.1cm}
    \includegraphics[width=0.43\textwidth]{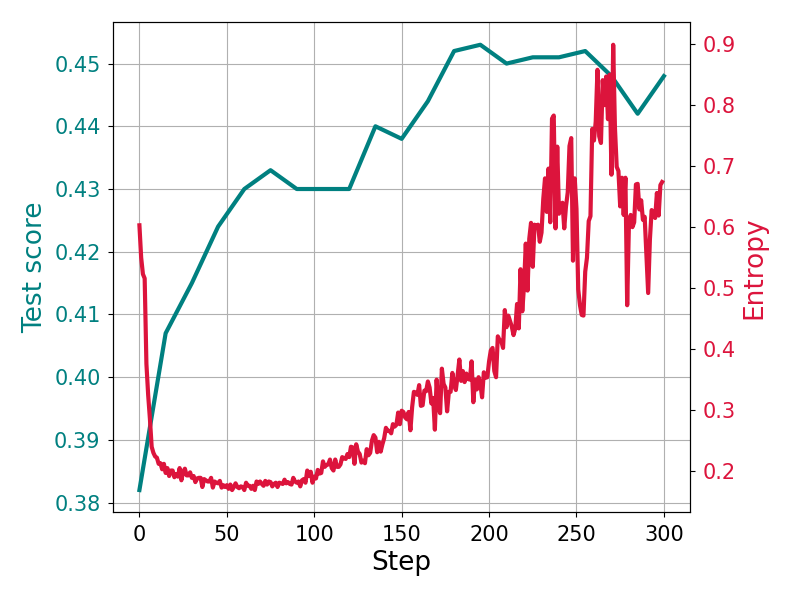}
    \vspace*{-0.83cm}
    \caption{GRPO with a constant entropy bonus coefficient.}
    \label{fig:entropy change}
    \vspace*{-0.5cm}
\end{wrapfigure}
For entropy-regularized RL, a constant entropy coefficient $\lambda$ is often sufficient to properly control the policy entropy in robotic and games RL \citep{mnih2016asynchronous,haarnoja2018soft}. 
However, we observe in Figure \ref{fig:entropy change} that this assumption does not necessarily hold in LLM-RL training as the entropy can change drastically in the mid of training, and the initially chosen coefficient fails. 
In the example, the entropy stabilizes in the early period, but starts to drastically fluctuates after step $200$ while the policy performance saturates. 
The entropy coefficient is not adjusted to change such a trend and fails to deliver better performance promised by entropy control.

To alleviate this issue, we automatically adjust the coefficient during training following
\begin{align}\label{eq:lambda update}
    \lambda'\! \xleftarrow[]{} &{\rm Proj}_{[\lambda_{\rm low},\lambda_{\rm high}]}\big[\lambda -\beta \min\big(\Tilde{\gH}(\pi_\theta)\!-\!\Tilde{\gH}_{\rm low},0)+\beta\min(\Tilde{\gH}_{\rm high}\!-\!\Tilde{\gH}(\pi_\theta),0)\big]
\end{align}
where $\beta$ is the coefficient learning rate, and $\tilde{\gH}_{\rm low},\tilde{\gH}_{\rm high}$ are respectively the lower and upper limit of the (clamped) entropy. 
The algorithm will try to confine $\tilde{\gH}$ within $[\tilde{\gH}_{\rm low},\tilde{\gH}_{\rm high}]$ by increasing/decreasing $\lambda$ when $\tilde{\gH}(\pi_\theta)$ is lower/higher than the limits. 
The intuition is that when entropy is high, the coefficient should be tuned down to reduce the entropy induced bias and shift weights to reward maximization, which in turn consumes entropy. While when entropy level is too low, the coefficient can be tuned up to leverage the benefits of entropy regularization. For better training stability, the entropy coefficient is also boxed in the range $[\lambda_{\rm low},\lambda_{\rm high}]$ so that large fluctuations of entropy do not lead to coefficient over-shoot.
Empirically, we find that this scheme helps improve reasoning efficiency by avoiding entropy and response length explosion.

\subsection{Algorithm}

\begin{algorithm}[t]
\caption{AEnt: Adaptive entropy regularization with token space clamping}
\begin{algorithmic}[1]
\STATE Initialize the algorithm, including choosing $\tilde{\gH}_{\rm low},\tilde{\gH}_{\rm high}$ and $\lambda_{\rm low},\lambda_{\rm high}$, clamping percentage $p$.

\FOR{global step $k=1$ {\bfseries to} $K$}
\STATE Set the sampling policy $\pi_b$.
\STATE Sample a batch of $s_0$ and for each $s_0$, a batch of $(a_0,s_1,a_1,\dots,s_{H-1},a_{H-1})$ following $\pi_b$.
\STATE Optimize for the batch surrogate of $\gL_{\rm AEnt}(\theta;\lambda)$ w.r.t. $\theta$.
\STATE Adjust the clamped entropy coefficient $\lambda$ following scheme \ref{eq:lambda update}.
\ENDFOR

\end{algorithmic}
\label{alg:AEnt}
\end{algorithm}

Given the current LLM policy $\pi_\theta$, we approximately maximize the following objective at each step:
\begin{equation}\label{eq:aent obj}
    \gL_{\rm AEnt}(\theta;\lambda) = \gL_{\rm PO}(\theta) + \lambda \tilde{\gH}(\pi_\theta)
\end{equation}
where $\gL_{\rm PO}(\theta)$ is a policy optimization objective, e.g., the GRPO objective is used in our tests. At each global step, we set the sampling policy $\pi_b$ according to the choice of policy optimization objective $\gL_{\rm PO}$. For example, in PPO-type algorithms, $\pi_b$ is set as the policy from last global step. Given $\pi_b$, a batch of queries $s_0 \sim \gD$ are sampled, and for each query, $\pi_b$ rolls out a batch of trajectories up to the maximum time step. With the batched samples, we can then optimize for the batch surrogate of $\gL_{\rm AEnt}(\theta;\lambda)$ for several mini-epochs. At the end of each global step, the entropy coefficient is adjusted according to scheme \ref{eq:lambda update}. The whole process is summarized in Algorithm \ref{alg:AEnt}.


\section{Experiments}\label{sec:experiments}
In this section, we conduct experiments to verify the effectiveness of our method.

\subsection{Training details}\label{sec:training details}


\textbf{Models, training datasets and baselines.} The algorithms are tested in multiple training settings: \textbf{(a)} we train the Qwen2.5-math-1.5b base model on the MATH dataset \citep{hendrycks2021measuring}, which contains $7500$ math problems with various difficulties and covers multiple mathematical areas; \textbf{(b)} we train the DeepSeek-R1-distilled-Qwen-1.5b \citep{deepseekai2025r1} model on $40$k verifiable queries from the OpenR1-math \citep{OpenR1} dataset, which is derived from Numina-math dataset \citep{numina_math_datasets}. In addition, we also train the Qwen2.5-math-7b model on 6k samples from DeepMath dataset \citep{he2025deepmath}, the results of which is deferred to Appendix \ref{appendix:additional expriment}. We compare our algorithm with GRPO and the conventional entropy regularization method which we call EntReg, where the GRPO objective is augmented with the original entropy bonus used in \citep{mnih2016asynchronous,schulman2017proximal}.

\textbf{Evaluation.}  We evaluate models on the AIME 2024, MATH-Hard test split \citep{hendrycks2021measuring}, MATH-500 \citep{lightman2023lets}, AMC23, MinervaMath \citep{lewkowycz2022solving} and OlympiadBench \citep{he2024olympiadbench}. We estimate the test score by averaging 4 tries per query on all benchmarks. The test-time generation temperature is $0.6$, top-p is $0.95$ and top-k is $20$.

\textbf{Hyper-parameter settings.} The tests are based on the verl framework \citep{sheng2025hybridflow}.
\footnote{Our code is available at \hyperlink{https://github.com/antgroup/AEnt}{https://github.com/antgroup/AEnt}.}
When training Qwen2.5-math-1.5b base model on the MATH dataset, we use AdamW optimizer with a learning rate of $2\times10^{-6}$. We set the max response length as $3072$. We use a batch size of $512$, and for each query we roll out $16$ responses with default sampling parameters (top-p and temperature set as $1$). For AEnt, we use the GRPO objective as $\gL_{\rm PO}$. We use a clamping percentage $p=0.33$, and set $\tilde{\gH}_{\rm low}\!=\!0.15$ and $\tilde{\gH}_{\rm high}\!=\!0.24$. We use an initial entropy coefficient of $0.002$, and start updating the coefficient from the third epoch with $\beta=0.002$. We clip the coefficient in between $0.0006$ and $0.009$. For EntReg method, we use the traditional entropy bonus with a fixed entropy coefficient of $0.002$. When training DeepSeek-R1-distilled-Qwen model on the OpenR1-math dataset, we use a learning rate of $1\!\times\!10^{-6}$, a max response length of $7168$, a batch size of $256$ and for each query we roll out $8$ responses. We use $p=0.25$, $\tilde{\gH}_{\rm low}\!=\!0.35$ and $\tilde{\gH}_{\rm high}\!=\!0.62$, an initial coefficient of $3\!\times\!10^{-4}$, and start updating the coefficient from the second epoch with $\beta=10^{-4}$. We clip the coefficient in between $4\times10^{-5}$ and $0.001$.

\begin{figure*}[t]
\centering
 \begin{subfigure}[b]{0.95\textwidth}
        \centering
        \includegraphics[width=0.49\textwidth]{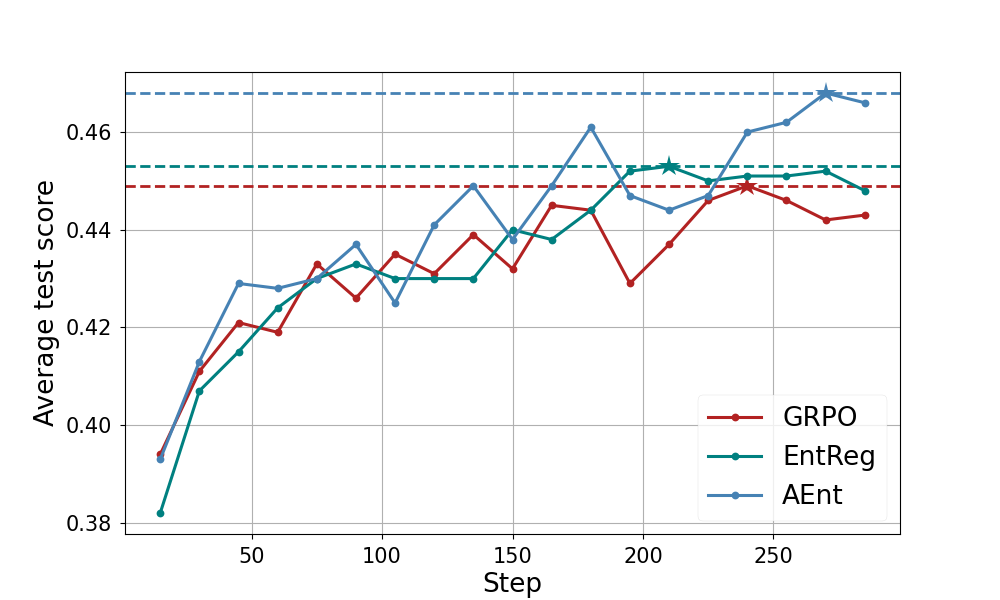}
        \includegraphics[width=0.49\textwidth]{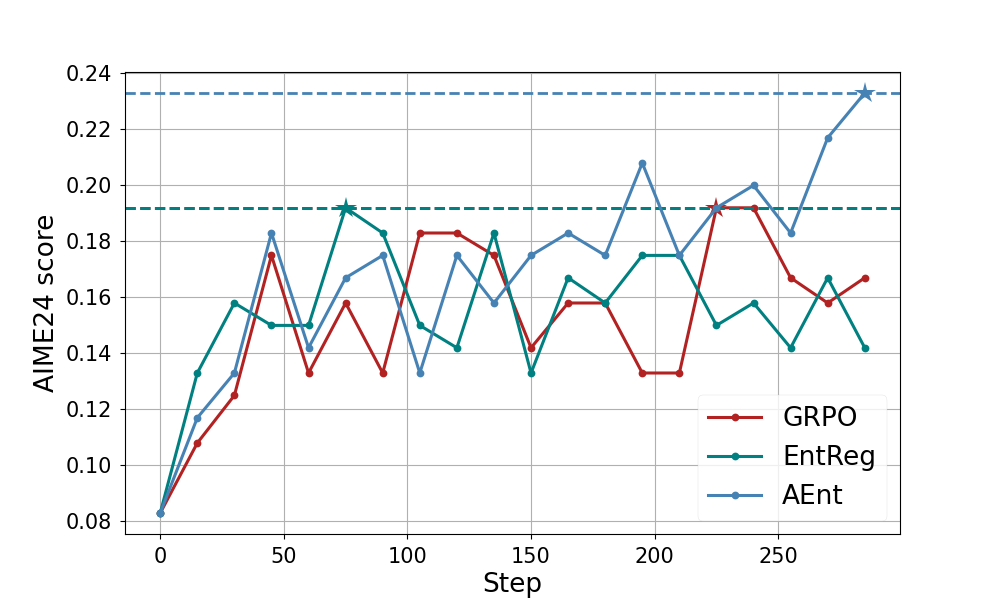}
        \caption{Training Qwen2.5-math-1.5b on MATH dataset.}
        \label{fig:performance comparison sub1}
    \end{subfigure} 
    \begin{subfigure}[b]{0.95\textwidth}
    \vspace*{-0.06cm}
        \centering
        \includegraphics[width=0.49\textwidth]{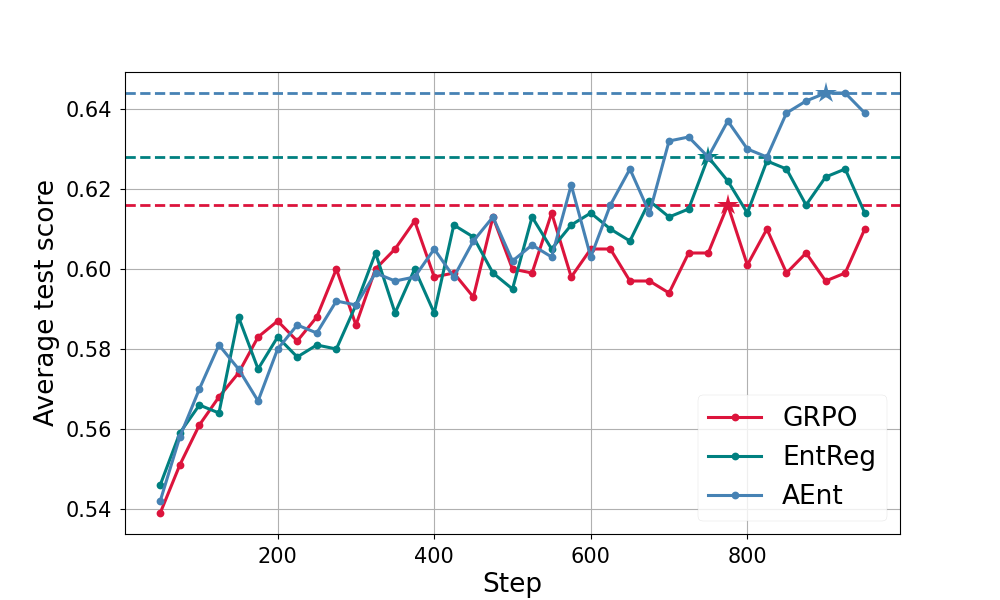}
        \includegraphics[width=0.49\textwidth]{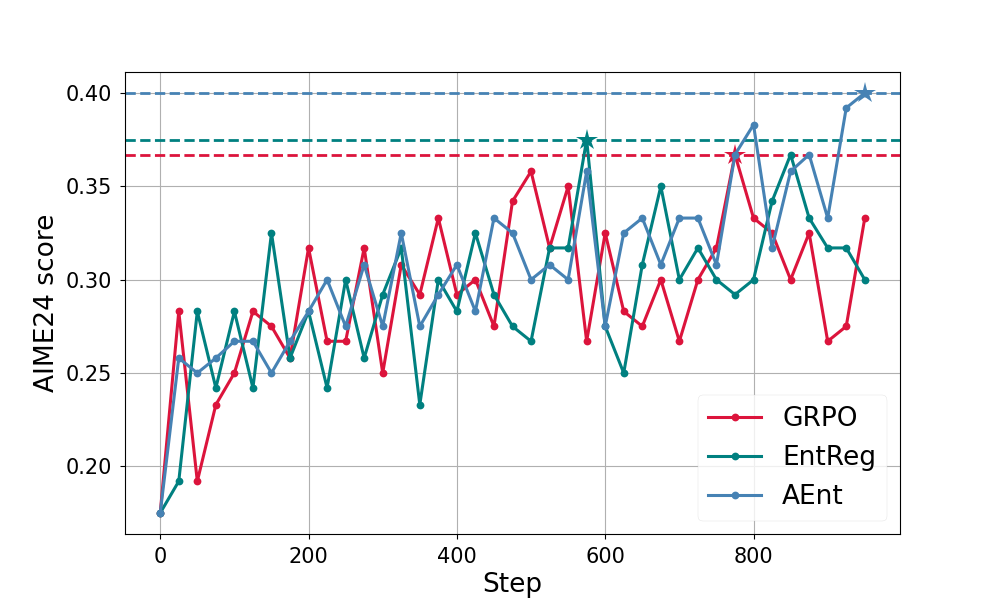}
        \caption{Training DeepSeek-R1-distilled-Qwen-1.5b on a subset of OpenR1-math dataset.}
        \label{fig:performance comparison sub2}
    \end{subfigure}
    \vspace{-0.1cm}
     \caption{Test score comparison (see Figure \ref{fig:entropy resp length comparison} for more training metrics). }
    \label{fig:performance comparison}
    \vspace*{-0.15cm}
\end{figure*}
\begin{table*}[t]
\centering
    \resizebox{0.98\textwidth}{!}{%
    \setlength{\tabcolsep}{3pt}
    \begin{tabular}{lcc|cc|cc|cc|cc|cc} \toprule
  & \multicolumn{2}{c}{\textbf{MATH-Hard}} & \multicolumn{2}{c}{\textbf{MATH-500}}  & \multicolumn{2}{c}{\textbf{AIME24}} & \multicolumn{2}{c}{\textbf{Minerva}} & \multicolumn{2}{c}{\textbf{Olympiad}} & \multicolumn{2}{c}{\textbf{AMC}}  \\
 \cmidrule(lr){2-3} \cmidrule(lr){4-5} \cmidrule(lr){6-7}\cmidrule(lr){8-9} \cmidrule(lr){10-11} \cmidrule(lr){12-13}
    \textbf{Setting} & \text{\small(a)} & \text{\small(b)} & \text{\small(a)} & \text{\small(b)}& \text{\small(a)} & \text{\small(b)}& \text{\small(a)} & \text{\small(b)}& \text{\small(a)} & \text{\small(b)}& \text{\small(a)} & \text{\small(b)}  \\
    \cmidrule(r){1-1} \cmidrule(lr){2-2}\cmidrule(lr){3-3}\cmidrule(lr){4-4}    \cmidrule(lr){5-5} \cmidrule(lr){6-6}\cmidrule(lr){7-7}\cmidrule(lr){8-8}\cmidrule(lr){9-9}\cmidrule(lr){10-10}\cmidrule(lr){11-11}\cmidrule(lr){12-12}\cmidrule(lr){13-13}
    \textbf{Base} &  0.368 & 0.661 & 0.584 & 0.792 & 0.083 & 0.225 & 0.179 & 0.311 & 0.279 & 0.432 & 0.406 & 0.594\\ 
    \textbf{GRPO} & 0.524 & 0.773 & 0.756 & 0.865 & 0.192 & 0.367 & 0.311 & 0.347 & 0.364 & 0.576 & 0.550 & 0.769\\
    \textbf{EntReg} & 0.546 & 0.808 & \textbf{0.752} & 0.872 & 0.167 & 0.342 & 0.316 & \textbf{0.359} & 0.370 & 0.576 & 0.562 & 0.794 \\
    \textbf{AEnt} & \textbf{0.552} & \textbf{0.813} & 0.750 & \textbf{0.882}& \textbf{0.217} & \textbf{0.392} &\textbf{0.330} & \textbf{0.359} & \textbf{0.377} & \textbf{0.591} & \textbf{0.581} & \textbf{0.825}  \\
    \bottomrule
    \end{tabular}}
    \caption{Test scores by benchmark, where we evaluate the model with the highest average test score trained by each algorithm. Here (a), (b) indicates the two settings described in \ref{sec:training details}. \textbf{Bold} numbers indicate the best performance one on the benchmark.\vspace{-0.05cm}}
    \label{table:performance comparison}
\end{table*}

\begin{figure*}[t]
\centering
 \begin{subfigure}[t]{0.95\textwidth}
        \centering
        \includegraphics[width=0.49\textwidth]{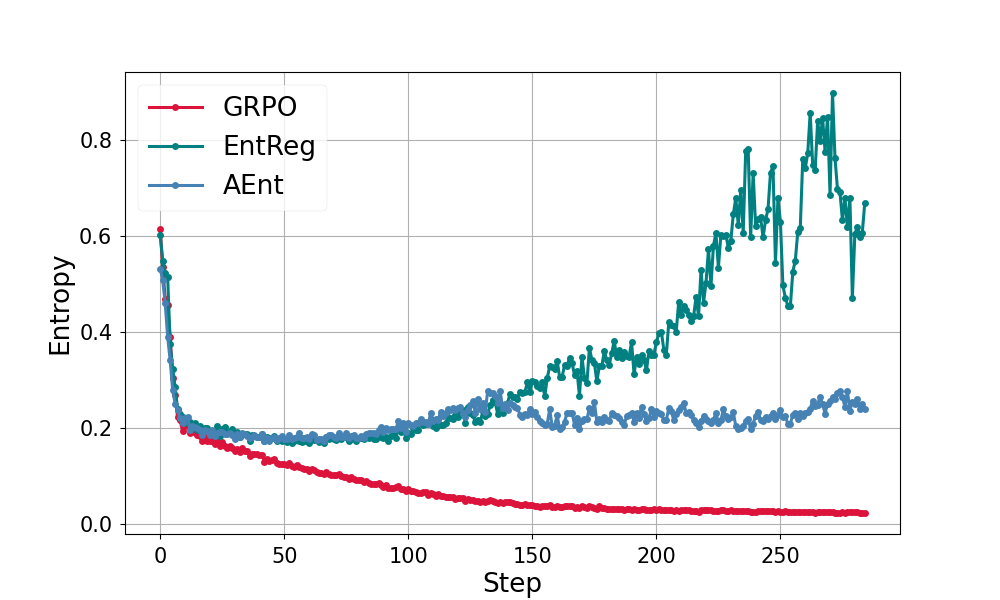}
        \includegraphics[width=0.49\textwidth]{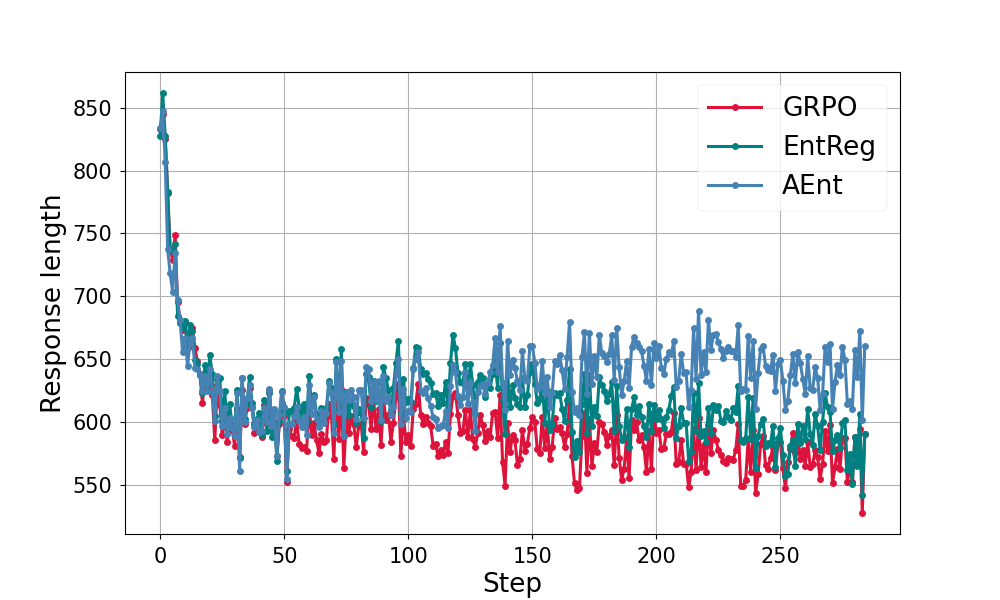}
        \caption{Training Qwen2.5-math-1.5b on MATH dataset.}
        \label{fig:entropy resp length comparison sub1}
    \end{subfigure}
    \hfill
    \begin{subfigure}[t]{0.95\textwidth}
        \centering
        \includegraphics[width=0.49\textwidth]{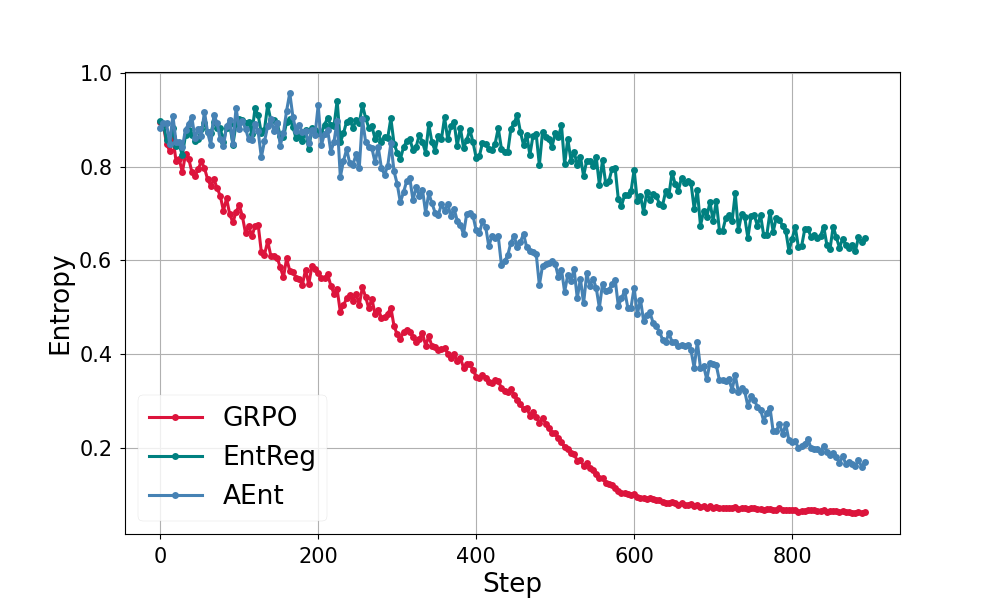}
        \includegraphics[width=0.49\textwidth]{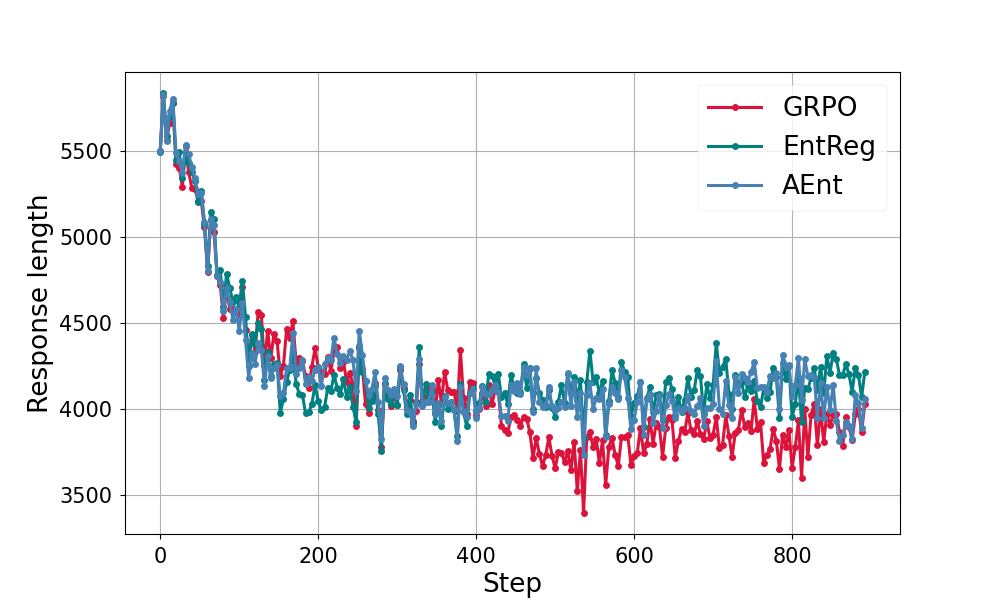}
        \caption{Training DeepSeek-R1-distilled-Qwen-1.5b on a subset of OpenR1-math dataset.}
        \label{fig:entropy resp length comparison sub2}
    \end{subfigure}
     \caption{Entropy and response length trend (see also Figure \ref{fig:performance comparison} for test score comparison). }
    \label{fig:entropy resp length comparison}
    \vspace*{-0.15cm}
\end{figure*}

\subsection{Performance analysis}\label{sec:experiments performance analysis}

We report the test performance in Table \ref{table:performance comparison} and Figures \ref{fig:performance comparison} \& \ref{fig:entropy resp length comparison}. It is observed AEnt outperforms the baselines on average, and on 5 out of the 6 benchmarks across the two different experimental settings. 

\textbf{An observation on the test score and the entropy trend.} An interesting observation from Figures \ref{fig:entropy resp length comparison sub1} is that after around $175$ steps (collapse time), the policy entropy of GRPO largely depletes and the entropy of EntReg starts to drastically fluctuate, while AEnt's policy entropy is kept stable. Then one can observe from Figure \ref{fig:performance comparison sub1} that the test score of GRPO and EntReg plateaus around the same step, while the score of AEnt continues to improve and surpasses the baselines past the collapse time. 
This observation is consistent with our intuition and theoretical analysis: after GRPO's entropy collapse, its policy becomes concentrated on few paths and no new information can be gained in the sampling process, ultimately leading to the stagnancy of the learning process. This is predicted by Proposition \ref{prop:entropy insight b1} that the policy will become stationary after entropy depletion. 
Additionally, it can be observed from Figure \ref{fig:entropy resp length comparison} that the entropy regularization methods result in slightly longer response in the mid/end of the training period. The potential reason is that the regularization makes the model less certain, and thus the models tend to continue its generation, resulting in longer response. Nonetheless, the increase in response length is relatively mild and we did not observe a major drawback in reasoning efficiency.

\subsection{Ablation studies}

In this section, we conduct ablation studies on our algorithm. 

\textbf{Adaptive coefficient stabilizes training.} In Figure \ref{fig:coeff ablation}, we compare the performance of adaptive coefficient vs constant coefficient of the regularizer. The test performance is similar for the two methods in this particular experiment. However, adaptive coefficient leads to a significant advantage on reasoning efficiency by delivering more compact responses while not sacrificing accuracy. In the third plot of Figure \ref{fig:coeff ablation}, constant coefficient fails to stabilize policy entropy in the mid of training, which results in the entropy blow up. We observe a positive correlation between entropy and response length in this case, where a exploding entropy leads to repeated reasoning patterns that do not increase the test scores. On the other hand, the adaptive coefficient successfully prevents the entropy and response length from blowing up. 

\textbf{Analysis of the entropy clamping percentage $p$.} In Figure \ref{fig:clamp p ablation}, we compare the algorithmic performance under different choice of clamping percentage $p$. Intuitively, the percentage $p$ decides the size of the clamped space $\gA(s)$, where a larger $p$ leads to more aggressive clamping and less tokens taken into account during entropy calculation. This would smooth the LLM policy on a more compact space, reducing the bias induced by entropy maximization while running the risk to leave out valuable tokens. In this sense, it is reasonable to try to maximize $p$ until the performance drops, which is also suggested by our reported results.
Despite the fact the AEnt's performance is affected by the choice of $p$, its advantage over the baselines is somewhat robust to the choice. It can be observed from Figure \ref{fig:clamp p ablation} that AEnt outperforms the baselines with different choices of $p$. 

\begin{figure*}[t]
\centering
    \includegraphics[width=0.333\textwidth]{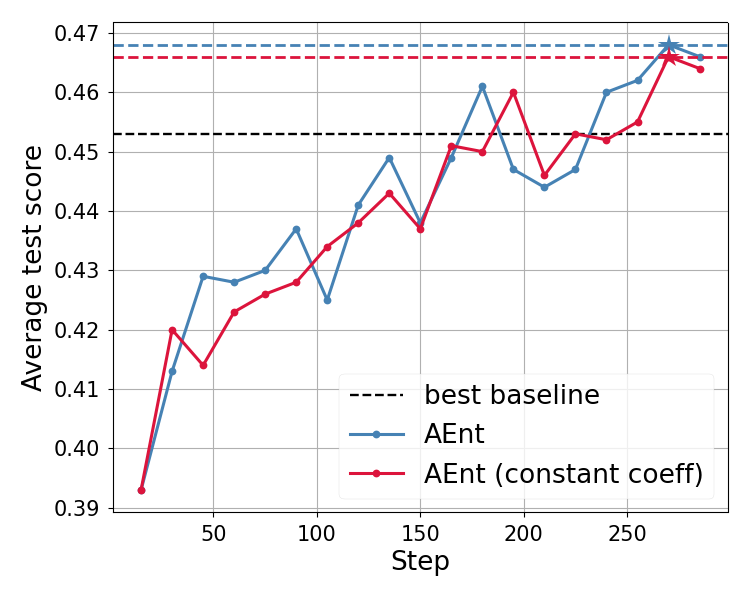}\hspace{-0.2cm}
    \includegraphics[width=0.333\textwidth]{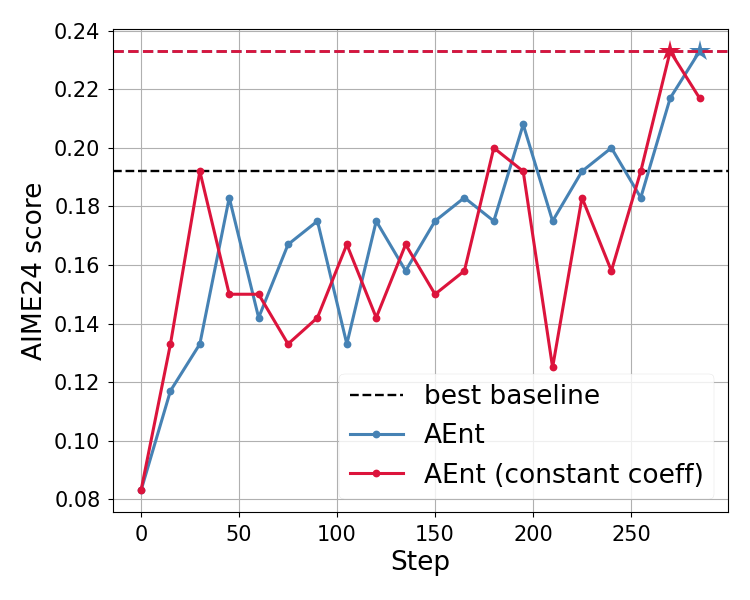}\hspace{-0.2cm}
    \includegraphics[width=0.333\textwidth]{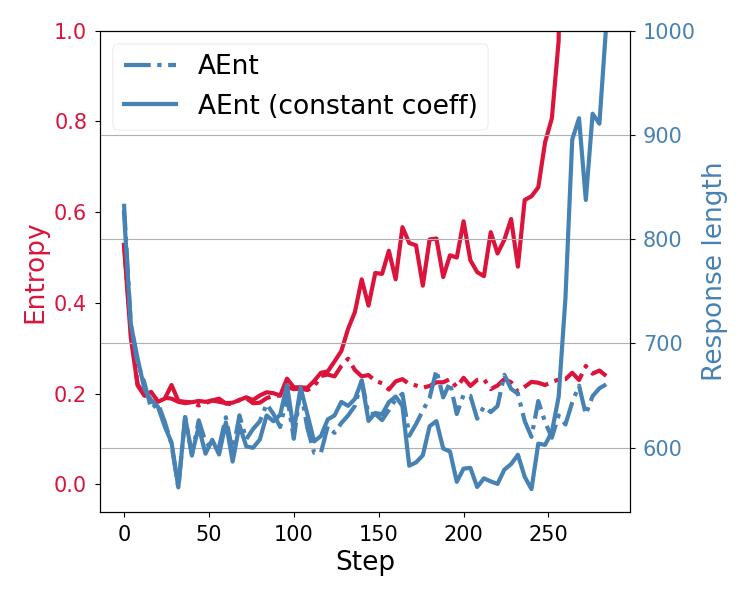}
     \caption{AEnt with adaptive entropy coefficient vs with a constant coefficient. The score in this test is similar. Adaptive coefficient better controls the response length and the policy entropy.}
    \label{fig:coeff ablation}
    \vspace*{-0.35cm}
\end{figure*}

\begin{figure*}[t]
\centering
    \includegraphics[width=0.333\textwidth]{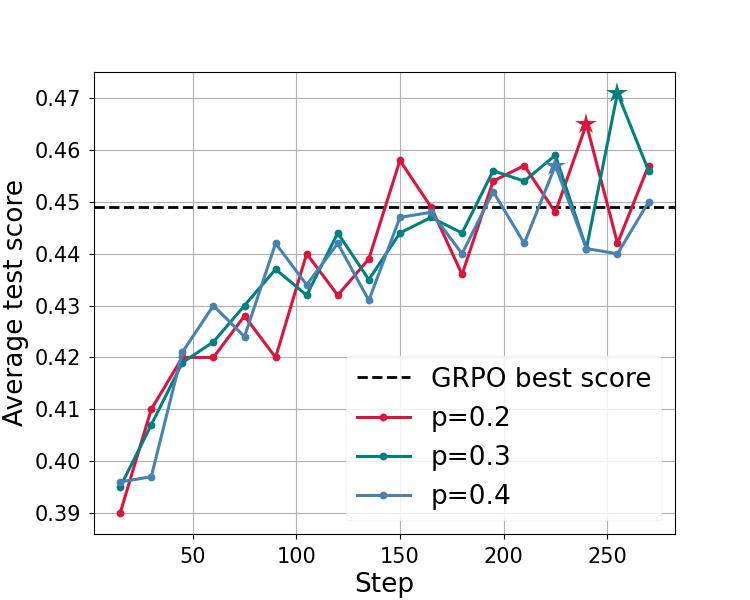}\hspace{-0.2cm}
    \includegraphics[width=0.333\textwidth]{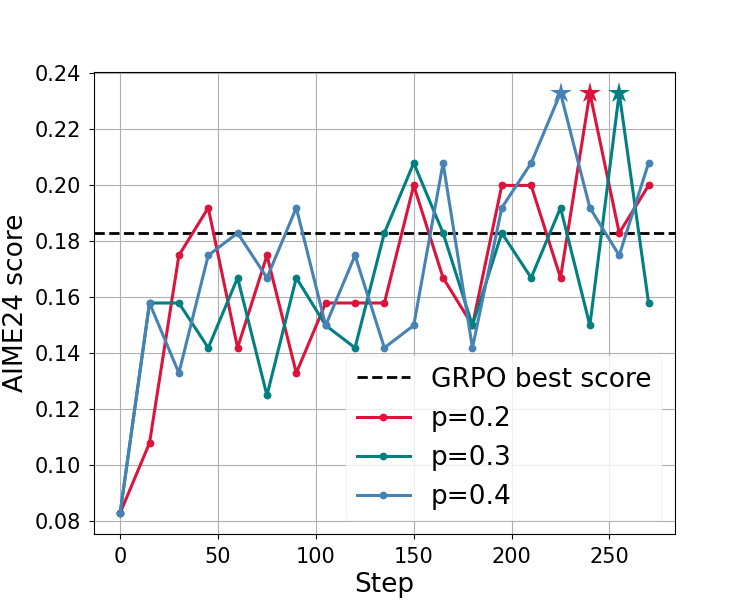}\hspace{-0.2cm}
    \includegraphics[width=0.333\textwidth]{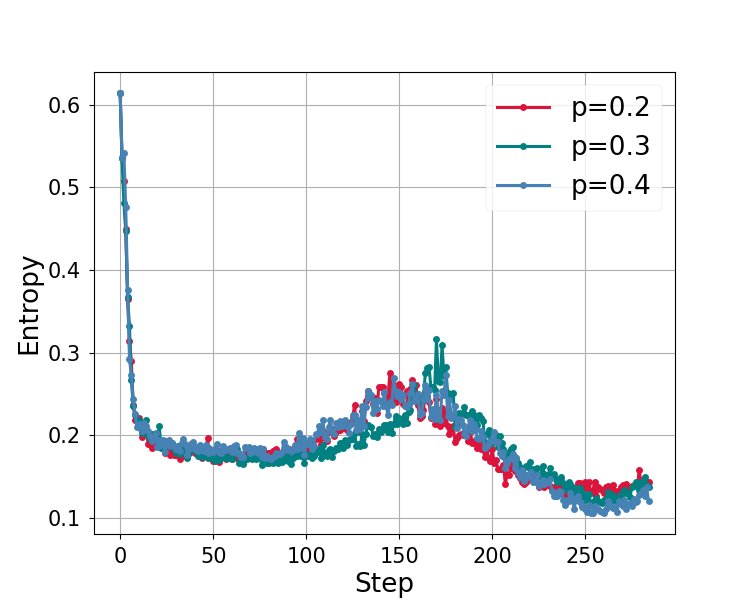}\hspace{-0.1cm}
     \caption{Comparison of different clamping percentage $p$. }
    \label{fig:clamp p ablation}
    \vspace*{-0.35cm}
\end{figure*}

\subsection{Time complexity}
We compare the time complexity of each algorithm under different base models and training datasets. The results are reported in Table \ref{table:time complexity}. The experiments on Qwen-2.5-math-1.5b and MATH training dataset are conducted on 4xA100, and all other experiments are conducted on 8xA100. The hyper-paremter setting has been described in Section \ref{sec:training details} where for fair comparison, we keep the batch sizes, the max response length and all computation speed related configures the same for all algorithms.

\begin{table*}[h!]
\centering
    \resizebox{0.98\textwidth}{!}{%
    \setlength{\tabcolsep}{3pt}
    \begin{tabular}{lccc|ccc} \toprule
  & \multicolumn{3}{c}{\textbf{Qwen-math-1.5b+MATH}} & \multicolumn{3}{c}{\textbf{R1-distilled-Qwen-1.5b+OpenR1}}  \\
 \cmidrule(lr){2-4} \cmidrule(lr){5-7} 
       & \text{ update per step} & \text{to GRPO score} & \text{to highest score} & \text{ update per step} & \text{to GRPO score} & \text{to highest score} \\
    \cmidrule(r){1-1} \cmidrule(lr){2-2}\cmidrule(lr){3-3}\cmidrule(lr){4-4}    \cmidrule(lr){5-5} \cmidrule(lr){6-6} \cmidrule(lr){7-7} 
    \textbf{GRPO} & 0.234 h  &  57 h &  57 h &  0.303 h  & 237 h & 237 h\\
    \textbf{EntReg} & 0.253 h & 49 h   & 52 h   & 0.316 h  & 215 h & 238 h  \\
    \textbf{AEnt}  & 0.256 h  & 35 h & 69 h   &  0.324 h & 186 h & 275 h \\
    \bottomrule
    \end{tabular}
    }
    \caption{Time complexity comparison under different settings. ``Update per step" indicates the GPU hours of forward/backward process per step; ``to GRPO/highest score" indicates the total GPU hours to reach the highest score achieved by GRPO/the algorithm itself. The first and second column respectively reports the results for setting (a) and (b) described in Section \ref{sec:training details}.}
    \label{table:time complexity}
\end{table*}
Overall, AEnt consumes slightly more compute per step due to 1) AEnt-trained model's response length is moderately larger than those of the baselines, as indicated in Figure \ref{fig:entropy resp length comparison}; 2) compared to GRPO, it requires the extra forward/backward process of the clamped entropy regularizer, and compared to EntReg, the additional clamping related computation is off-loaded to CPU in our implementation for memory save.
However, AEnt reaches a common score threshold faster than the baselines as indicated by the ``to GRPO score" results, indicating AEnt preserves the acceleration effect of entropy regularization.
AEnt reaches the highest test score slower than the baselines since it prevents premature convergence, and converges to higher test scores which takes more training steps.

\section{Conclusion and future directions}
In this work, we showed that entropy regularization suffers from large bias in LLM-RL training. As a remedy of this issue, we propose an entropy control method that utilizes a clamped entropy bonus with an automatically adjusted coefficient. We show that AEnt consistently outperforms competitive baselines across multiple benchmarks. We believe AEnt can demonstrate more significant advantages if tested on larger models with more compute. In this work, we did not include a theoretical analysis of the clamped entropy. In addition, we believe the choice of the clamped space $\gA(s)$ is crucial to the algorithm's effectiveness, and finding a better choice can potentially yield significant performance gains. For example, one may consider removing actions that are redundant \citep{baram2021action,zhong2024no} or grouping similar actions in entropy calculation. Alternatively, since computing entropy on token level suffers from the large dimensionality of LLM vocabulary, it can be beneficial to design entropy regularizer in the state or action representation space \citep{tennenholtz2019natural,tavakoli2018action}. We leave these studies to future works.



\bibliography{aent}
\bibliographystyle{iclr2026_conference}

\appendix
\section{Appendix}
\subsection{Omitted notations in Section \ref{sec:preliminary}}\label{appendix:notations}
\textbf{Value functions.} We first define some notations we omitted in Section \ref{sec:preliminary}. Given the definition of $V^{\pi_\theta})h(s)$, we can define the Q-function as $Q_h^{\pi_\theta}(s,a)=r(s,a)+V_{h+1}^{\pi_\theta}(s')$ with $s'=\mathcal{P}(s,a)$. By this definition, we can also equivalently write
$$Q_h^{\pi_\theta}(s,a) = \E_{\pi_\theta}\Big[\sum_{t=h}^H r(s_t,a_t)|s_h=s,a_h=a\Big]$$
We can also define the advantage function as $A_h^{\pi_\theta}(s,a)=Q_h^{\pi_\theta}(s,a)-V_h^{\pi_\theta}(s)$.

\textbf{Entropy-regularized value functions.} For the entropy-regularized setting, we can analogously define the entropy-regularized value functions as
\begin{align*}
    &V_{h,\lambda}^{\pi_\theta}(s) \coloneqq \E_{\pi_\theta}\Big[ \sum_{t=h}^{H-1} \big(r(s_t,a_t)-\lambda\log\pi_\theta(a_t|s_t)\big)|s_h=s\Big] \\
    &Q_{h,\lambda}^{\pi_\theta}(s,a) \coloneqq r(s,a)+V_{h+1,\lambda}^{\pi_\theta}(s')\text{ with }s'=\mathcal{P}(s,a)
\end{align*}
Then the entropy-regularized advantage function is defined as $A_{h,\lambda}^{\pi_\theta}(s,a)=Q_{h,\lambda}^{\pi_\theta}(s,a)-\lambda\log\pi_\theta(a|s)-V_{h,\lambda}^{\pi_\theta}(s)$.

Note that we omit the time step subscript in the value functions when $h=0$, e.g., we write $V_h^{\pi_\theta}|_{h=0}$ as $V^{\pi_\theta}$ and similarly for all the value functions.

\subsection{Preliminary lemmas}
\begin{lemma}[Entropy gradient]\label{lemma:entgrad}
For the softmax policy, we have
\begin{align}
    \nabla \gH(\pi_\theta) = -\E_{\pi_\theta}\Big[\sum_{h=0}^{H-1}\nabla\log\pi_\theta(a_h|s_h)\sum_{t=h}^{H-1} \log\pi_\theta(a_t|s_t)\Big] 
\end{align}
\end{lemma}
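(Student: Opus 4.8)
The plan is to differentiate the entropy functional $\gH(\pi_\theta) = -\E_{\pi_\theta}\big[\sum_{t=0}^{H-1}\log\pi_\theta(a_t|s_t)\big]$ using the log-derivative (score-function) trick, carefully tracking the two sources of $\theta$-dependence: the sampling distribution over trajectories and the integrand $\sum_t \log\pi_\theta(a_t|s_t)$ itself. First I would write the expectation explicitly as a sum over trajectories weighted by the trajectory probability $p_\theta(\tau) = \prod_{h=0}^{H-1}\pi_\theta(a_h|s_h)$ (the transition is deterministic concatenation, so no transition factors appear). Then $\nabla\gH(\pi_\theta) = -\sum_\tau \big[\nabla p_\theta(\tau)\big]\big(\sum_{t} \log\pi_\theta(a_t|s_t)\big) - \sum_\tau p_\theta(\tau)\nabla\big(\sum_t \log\pi_\theta(a_t|s_t)\big)$.

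For the first term I would use $\nabla p_\theta(\tau) = p_\theta(\tau)\nabla\log p_\theta(\tau) = p_\theta(\tau)\sum_{h=0}^{H-1}\nabla\log\pi_\theta(a_h|s_h)$, which turns it into $-\E_{\pi_\theta}\big[\big(\sum_{h}\nabla\log\pi_\theta(a_h|s_h)\big)\big(\sum_{t}\log\pi_\theta(a_t|s_t)\big)\big]$. The second term is $-\E_{\pi_\theta}\big[\sum_{t}\nabla\log\pi_\theta(a_t|s_t)\big]$, and the key observation is that this term vanishes: each summand $\E_{\pi_\theta}[\nabla\log\pi_\theta(a_t|s_t)]$ is zero because, conditioning on $s_t$ (which is determined by the trajectory prefix), $\E_{a_t\sim\pi_\theta(s_t)}[\nabla\log\pi_\theta(a_t|s_t)] = \sum_{a}\pi_\theta(a|s_t)\nabla\log\pi_\theta(a|s_t) = \nabla\sum_a\pi_\theta(a|s_t) = \nabla 1 = 0$. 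Nothing downstream of $a_t$ in the trajectory affects this inner expectation, so by the tower property the whole term is zero. This leaves $\nabla\gH(\pi_\theta) = -\E_{\pi_\theta}\big[\big(\sum_{h=0}^{H-1}\nabla\log\pi_\theta(a_h|s_h)\big)\big(\sum_{t=0}^{H-1}\log\pi_\theta(a_t|s_t)\big)\big]$.

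The last step is to reduce the double sum $\sum_{h}\sum_{t}$ to the claimed causal form $\sum_{h=0}^{H-1}\nabla\log\pi_\theta(a_h|s_h)\sum_{t=h}^{H-1}\log\pi_\theta(a_t|s_t)$, i.e., to drop the terms with $t<h$. For a fixed pair $h>t$ I would argue that $\E_{\pi_\theta}\big[\nabla\log\pi_\theta(a_h|s_h)\log\pi_\theta(a_t|s_t)\big] = 0$ by conditioning on the prefix up through time $h$ (which fixes both $s_h$ and the factor $\log\pi_\theta(a_t|s_t)$) and then taking the inner expectation over $a_h\sim\pi_\theta(s_h)$, which again gives $\E[\nabla\log\pi_\theta(a_h|s_h)\mid s_h] = 0$; the bounded factor $\log\pi_\theta(a_t|s_t)$ pulls out, killing the cross term. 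Collecting the surviving terms ($t\ge h$) yields exactly the statement of Lemma~\ref{lemma:entgrad}.

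I expect the main subtlety — not difficulty, but the point requiring care — to be the second argument, the removal of the anti-causal ($t<h$) terms: one must be precise about which random variables are measurable with respect to the conditioning sigma-algebra (the prefix $s_0,a_0,\dots,s_h$) and invoke the tower property correctly, since a naive "expectation of score is zero" is only valid conditionally. The first term's vanishing ($\sum_t\E[\nabla\log\pi_\theta(a_t|s_t)]=0$) is the same kind of argument and is routine once the conditioning structure is set up. Everything else is the standard REINFORCE/log-derivative manipulation and is purely mechanical.
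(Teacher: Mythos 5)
Your proposal is correct and follows essentially the same route as the paper's proof: expand the expectation over trajectories, split the gradient into the term hitting the integrand (which vanishes by the normalization/score identity) and the term hitting the trajectory probability (handled by the log-derivative trick), then kill the anti-causal $t<h$ cross terms via the tower property and $\E_{a_h\sim\pi_\theta(s_h)}[\nabla\log\pi_\theta(a_h|s_h)\mid s_h]=0$. The only cosmetic difference is that the paper disposes of the first term collectively by writing $\prod_h\pi_\theta\cdot\nabla\sum_t\log\pi_\theta=\nabla\prod_h\pi_\theta$ and summing to $\nabla 1=0$, whereas you do it termwise; these are equivalent.
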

\begin{proof}
    Starting from the definition of entropy, we can expand the expectation and write
    \begin{align}
        \gH(\pi_\theta) 
        &= -\sum_{s_0,a_0,\dots,a_{H-1}}\mathbb{P}(s_0,a_0\dots,a_{H-1}|\pi_\theta)\sum_{t=0}^{H-1} \log\pi_\theta(a_t|s_t) \nonumber\\
        &= -\sum_{s_0,a_0,\dots,a_{H-1}}\mathbb{P}(s_0)\pi_\theta(a_0|s_0)\dots\pi_\theta(a_{H-1}|s_{H-1})\sum_{t=0}^{H-1} \log\pi_\theta(a_t|s_t) 
    \end{align}
    where in the first equality, the expectation is only taken over $s_0$ and the action sequence since the transition is a deterministic in our LLM setting.
    Then the gradient of the entropy is given by
    \begin{align}\label{leg:ent grad dcomp0}
        \nabla \gH(\pi_\theta) 
        &= -\sum_{s_0,a_0,\dots,a_{H-1}}\mathbb{P}(s_0)\Pi_{h=0}^{H-1}\pi_\theta(a_h|s_h)\nabla\Big(\sum_{t=0}^{H-1} \log\pi_\theta(a_t|s_t)\Big) \nonumber\\
        &~~~~~- \sum_{s_0,a_0,\dots,a_{H-1}}\mathbb{P}(s_0)\nabla\Big(\Pi_{h=0}^{H-1}\pi_\theta(a_h|s_h)\Big)\sum_{t=0}^{H-1} \log\pi_\theta(a_t|s_t)
    \end{align}
    For the first term in the RHS of \eqref{leg:ent grad dcomp0}, we have
    \begin{align}\label{leg:ent grad dcomp1}
        \sum_{s_0,a_0,\dots,a_{H-1}}\mathbb{P}(s_0)\Pi_{h=0}^{H-1}\pi_\theta(a_h|s_h)\nabla\Big(\sum_{t=0}^{H-1} \log\pi_\theta(a_t|s_t)\Big) 
        &= \sum_{s_0,\dots,a_{H-1}}\mathbb{P}(s_0)\nabla\Big(\Pi_{h=0}^{H-1}\pi_\theta(a_h|s_h)\Big)\nonumber\\
        &= \sum_{s_0}\mathbb{P}(s_0)\nabla\Big(\sum_{a_0\dots a_{H-1}}\Pi_{h=0}^{H-1}\pi_\theta(a_h|s_h)\Big) \nonumber\\
        &=\sum_{s_0}\mathbb{P}(s_0)\nabla 1= 0
    \end{align}
    For the second term in the RHS of \eqref{leg:ent grad dcomp0}, we have
    \begin{align}\label{leg:ent grad dcomp2}
        &\sum_{s_0,a_0,\dots,a_{H-1}}\mathbb{P}(s_0)\nabla\Big(\Pi_{h=0}^{H-1}\pi_\theta(a_h|s_h)\Big)\sum_{t=0}^{H-1} \log\pi_\theta(a_t|s_t)\nonumber\\
        &=\sum_{s_0,a_0,\dots,a_{H-1}}\mathbb{P}(s_0)\Pi_{h=0}^{H-1}\pi_\theta(a_h|s_h)\sum_{h=0}^{H-1}\nabla\log\pi_\theta(a_h|s_h)\sum_{t=0}^{H-1} \log\pi_\theta(a_t|s_t)\nonumber\\
        &=\E_{\pi_\theta}\Big[\sum_{h=0}^{H-1}\nabla\log\pi_\theta(a_h|s_h)\sum_{t=0}^{H-1} \log\pi_\theta(a_t|s_t)\Big]\nonumber\\
        &=\E_{\pi_\theta}\Big[\sum_{h=1}^{H-1}\nabla\log\pi_\theta(a_h|s_h)\sum_{t=0}^{h-1} \log\pi_\theta(a_t|s_t)\Big]+\E_{\pi_\theta}\Big[\sum_{h=0}^{H-1}\nabla\log\pi_\theta(a_h|s_h)\sum_{t=h}^{H-1} \log\pi_\theta(a_t|s_t)\Big] \nonumber\\
        &=\E_{\pi_\theta}\Big[\sum_{h=1}^{H-1} \E_{a_h\sim\pi_\theta(s_h)}\big[\nabla\log\pi_\theta(a_h|s_h)|s_h \big]\sum_{t=0}^{h-1} \log\pi_\theta(a_t|s_t)\Big]\nonumber\\
        &~~~~+\E_{\pi_\theta}\Big[\sum_{h=0}^{H-1}\nabla\log\pi_\theta(a_h|s_h)\sum_{t=h}^{H-1} \log\pi_\theta(a_t|s_t)\Big] \nonumber\\
        &=\E_{\pi_\theta}\Big[\sum_{h=0}^{H-1}\nabla\log\pi_\theta(a_h|s_h)\sum_{t=h}^{H-1} \log\pi_\theta(a_t|s_t)\Big] 
    \end{align}
    where the second last equality follows from the towering property of the expectation, and the last equality follows from the fact that for any $s$, we have
    \begin{align}\label{leg:zero expected grad}
        \E_{a\sim\pi_\theta(s)}\big[\nabla\log\pi_\theta(a|s)|s \big] &= \sum_a \pi_\theta(a|s)\nabla\log\pi_\theta(a|s) \nonumber\\
        &= \sum_a \nabla\pi_\theta(a|s) \nonumber\\
        &= \nabla \sum_a\pi_\theta(a|s) = \nabla 1 = 0
    \end{align}
    Substituting \eqref{leg:ent grad dcomp1} and \eqref{leg:ent grad dcomp2} into \eqref{leg:ent grad dcomp0} yields
    \begin{align}
        \nabla \gH(\pi_\theta) = -\E_{\pi_\theta}\Big[\sum_{h=0}^{H-1}\nabla\log\pi_\theta(a_h|s_h)\sum_{t=h}^{H-1} \log\pi_\theta(a_t|s_t)\Big]
    \end{align}
    This completes the proof.
\end{proof}

\begin{lemma}\label{lemma:baseline grad}
Given any $h\in\{0,1,\dots,H-1\}$ and some baseline functions $b_h^{\pi_\theta}:\gS \mapsto \mathbb{R}$, we have for any policy $\pi_\theta$ that:
    \begin{align}
        \E_{\pi_\theta}\Big[\sum_{h=0}^{H-1}\nabla\log\pi_\theta(a_h|s_h)b_h^{\pi_\theta}(s_h)\Big] = 0
    \end{align}
    where the expectation is taken over $(s_0\sim\gD,a_0,\dots,a_{H-1})$ generated under policy $\pi_\theta$.
\end{lemma}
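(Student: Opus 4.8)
\textbf{Proof proposal for Lemma \ref{lemma:baseline grad}.}

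The plan is to exploit the product structure of the trajectory distribution together with the fact that the expected score function vanishes at every state, exactly as in equation \eqref{leg:zero expected grad}. First I would write the expectation explicitly as a sum over $s_0$ and the action sequence, using that the transition is deterministic: for each $h$, the term $\nabla\log\pi_\theta(a_h|s_h)b_h^{\pi_\theta}(s_h)$ is multiplied by $\mathbb{P}(s_0)\Pi_{j=0}^{H-1}\pi_\theta(a_j|s_j)$ and summed. Since $b_h^{\pi_\theta}(s_h)$ depends only on $s_h$, which in turn depends only on $s_0,a_0,\dots,a_{h-1}$, I would single out the sum over $a_h$ (and then $a_{h+1},\dots,a_{H-1}$) and peel it off.

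The key step is the following: fix $h$ and condition on $(s_0,a_0,\dots,a_{h-1})$, which determines $s_h$. Then
\begin{align*}
    \sum_{a_h}\pi_\theta(a_h|s_h)\nabla\log\pi_\theta(a_h|s_h) = \sum_{a_h}\nabla\pi_\theta(a_h|s_h) = \nabla 1 = 0,
\end{align*}
which is precisely \eqref{leg:zero expected grad}. Because $b_h^{\pi_\theta}(s_h)$ and the factors $\mathbb{P}(s_0)\Pi_{j<h}\pi_\theta(a_j|s_j)$ do not depend on $a_h$, and the remaining factors $\Pi_{j>h}\pi_\theta(a_j|s_j)$ sum to $1$ over $(a_{h+1},\dots,a_{H-1})$, the entire $h$-th summand collapses to zero. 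Summing over $h=0,\dots,H-1$ then gives the claim. Equivalently, and more cleanly, I would phrase this via the tower property: $\E_{\pi_\theta}[\nabla\log\pi_\theta(a_h|s_h)b_h^{\pi_\theta}(s_h)] = \E_{\pi_\theta}\big[b_h^{\pi_\theta}(s_h)\,\E_{a_h\sim\pi_\theta(s_h)}[\nabla\log\pi_\theta(a_h|s_h)\mid s_h]\big] = \E_{\pi_\theta}[b_h^{\pi_\theta}(s_h)\cdot 0] = 0$, using that $s_h$ is measurable with respect to the history up to time $h$.

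There is no real obstacle here; the only point requiring a little care is the measurability/independence bookkeeping — making sure that $b_h^{\pi_\theta}(s_h)$ is indeed a function of the pre-$h$ history so that it can be pulled outside the conditional expectation over $a_h$, which holds because $s_{t+1}=\gP(s_t,a_t)$ is a deterministic concatenation. Given the clean derivation of \eqref{leg:zero expected grad} already in the proof of Lemma \ref{lemma:entgrad}, the argument is essentially a one-line application of the tower property term by term.
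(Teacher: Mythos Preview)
Your proposal is correct and matches the paper's own proof essentially line for line: both apply the tower property term by term to pull $b_h^{\pi_\theta}(s_h)$ outside the inner expectation over $a_h$, and then invoke \eqref{leg:zero expected grad} to conclude that $\E_{a_h\sim\pi_\theta(s_h)}[\nabla\log\pi_\theta(a_h|s_h)\mid s_h]=0$. The paper's version is simply the terse ``tower property'' phrasing you give at the end, without the additional explicit unrolling.
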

\begin{proof}
    We have
    \begin{align}
        &\E_{\pi_\theta}\Big[\sum_{h=0}^{H-1}\nabla\log\pi_\theta(a_h|s_h) b_h^{\pi_\theta}(s_h)\Big]\nonumber\\
        &= \E_{\pi_\theta}\Big[\sum_{h=0}^{H-1}\E_{a_h\sim\pi_\theta(s_h)}\big[\nabla\log\pi_\theta(a_h|s_h)\big]b_h^{\pi_\theta}(s_h)\Big] = 0
    \end{align}
    which follows from the towering property of the expectation and \eqref{leg:zero expected grad}.
\end{proof}

\begin{lemma}[Entropy regularized softmax policy gradient]\label{lemma:pg}
    If the policy is a softmax, we have
    \begin{align}
        \nabla_{\theta_{s,a}} V_\lambda^{\pi_\theta}(\gD) =\sum_{t=0}^{H-1} \mathbb{P}_t^{\pi_\theta}(s)\pi_\theta(a
        |s)A_{t,\lambda}^{\pi_\theta}(s,a).
    \end{align}
    where $\mathbb{P}_t^{\pi_\theta}(s)$ is the shorthand notation of $\mathbb{P}(s_t=s|\pi_\theta)$, which is the probability of reaching state $s$ at time step $t$ given policy $\pi_\theta$.
\end{lemma}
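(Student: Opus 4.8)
The plan is to reduce the claim to the trajectory-level score-function identity together with the preliminary results Lemmas~\ref{lemma:entgrad} and~\ref{lemma:baseline grad}. First I would record that, since the LLM transition is deterministic, a trajectory $(s_0,a_0,\dots,a_{H-1})$ is random only through $s_0\sim\gD$ and the draws $a_h\sim\pi_\theta(s_h)$; hence $V_\lambda^{\pi_\theta}(\gD)=V^{\pi_\theta}(\gD)+\lambda\gH(\pi_\theta)=\E_{s_0\sim\gD}[V_{0,\lambda}^{\pi_\theta}(s_0)]$, which follows directly from \eqref{eq:ent} and the definitions of the entropy-regularized value functions in Appendix~\ref{appendix:notations}.

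For the gradient I would treat the two summands of $V_\lambda^{\pi_\theta}(\gD)$ separately. The unregularized term satisfies the standard finite-horizon policy gradient $\nabla V^{\pi_\theta}(\gD)=\E_{\pi_\theta}\big[\sum_{h=0}^{H-1}\nabla\log\pi_\theta(a_h|s_h)\sum_{t=h}^{H-1}r(s_t,a_t)\big]$, obtained by the same log-derivative expansion used in the proof of Lemma~\ref{lemma:entgrad} (differentiate $\sum_\tau\mathbb{P}(\tau|\pi_\theta)\sum_t r(s_t,a_t)$; the gradient hits only $\mathbb{P}(\tau|\pi_\theta)$, and the $t<h$ cross terms vanish by \eqref{leg:zero expected grad}). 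The entropy term is handled by Lemma~\ref{lemma:entgrad} verbatim. Adding them with weight $\lambda$ gives
\begin{align}
\nabla V_\lambda^{\pi_\theta}(\gD)=\E_{\pi_\theta}\Big[\sum_{h=0}^{H-1}\nabla\log\pi_\theta(a_h|s_h)\sum_{t=h}^{H-1}\big(r(s_t,a_t)-\lambda\log\pi_\theta(a_t|s_t)\big)\Big].\nonumber
\end{align}
Conditioning on the history through $a_h$ and using determinism of the transition, the inner sum has conditional expectation $Q_{h,\lambda}^{\pi_\theta}(s_h,a_h)-\lambda\log\pi_\theta(a_h|s_h)$, so by the tower property it may be replaced by this quantity; subtracting the state-only baseline $V_{h,\lambda}^{\pi_\theta}(s_h)$ via Lemma~\ref{lemma:baseline grad} turns the bracket into exactly $A_{h,\lambda}^{\pi_\theta}(s_h,a_h)$.

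It then remains to specialize to the coordinate $\theta_{s,a}$. Using the softmax identity $\nabla_{\theta_{s,a}}\log\pi_\theta(a'|s')=\mathbbm{1}[s'=s]\big(\mathbbm{1}[a'=a]-\pi_\theta(a|s)\big)$, the gradient becomes $\sum_{h=0}^{H-1}\E_{\pi_\theta}\big[\mathbbm{1}[s_h=s]\big(\mathbbm{1}[a_h=a]-\pi_\theta(a|s)\big)A_{h,\lambda}^{\pi_\theta}(s_h,a_h)\big]$. Conditioning on $\{s_h=s\}$ and taking the expectation over $a_h\sim\pi_\theta(s)$, the term $-\pi_\theta(a|s)\sum_{a'}\pi_\theta(a'|s)A_{h,\lambda}^{\pi_\theta}(s,a')$ drops out because $\sum_{a'}\pi_\theta(a'|s)A_{h,\lambda}^{\pi_\theta}(s,a')=0$ — the ``soft'' analogue of $\E_{a\sim\pi}[A^{\pi}]=0$, which I would check from the entropy-regularized Bellman relation $\sum_{a'}\pi_\theta(a'|s)\big(Q_{h,\lambda}^{\pi_\theta}(s,a')-\lambda\log\pi_\theta(a'|s)\big)=V_{h,\lambda}^{\pi_\theta}(s)$. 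What survives is $\sum_{h}\mathbb{P}(s_h=s|\pi_\theta)\,\pi_\theta(a|s)\,A_{h,\lambda}^{\pi_\theta}(s,a)$, i.e.\ the claimed formula after relabeling $h\to t$.

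I expect the main obstacle to be bookkeeping rather than a deep idea: one must route the policy-dependence of the surrogate ``reward'' $-\lambda\log\pi_\theta(a_t|s_t)$ through Lemma~\ref{lemma:entgrad} precisely, so that the extra $\nabla\big(\sum_t\log\pi_\theta(a_t|s_t)\big)$ term (which cancels there via $\nabla 1=0$) leaves nothing behind; and one must invoke the entropy-regularized Bellman relation — not the vanilla one — when deriving the zero-mean advantage identity. Everything else is routine finite-horizon policy-gradient manipulation.
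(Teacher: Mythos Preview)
Your proposal is correct and follows essentially the same route as the paper: combine the finite-horizon policy gradient for $V^{\pi_\theta}(\gD)$ with Lemma~\ref{lemma:entgrad} to get the score-function form with $Q_{h,\lambda}^{\pi_\theta}(s_h,a_h)-\lambda\log\pi_\theta(a_h|s_h)$, subtract the baseline $V_{h,\lambda}^{\pi_\theta}(s_h)$ via Lemma~\ref{lemma:baseline grad}, then specialize to $\theta_{s,a}$ using the softmax log-derivative and the zero-mean property of $A_{t,\lambda}^{\pi_\theta}$. The only cosmetic difference is that the paper cites the policy gradient theorem directly for the unregularized piece whereas you re-derive it by the same log-derivative expansion as in Lemma~\ref{lemma:entgrad}.
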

\begin{proof}
    By the policy gradient theorem \citep{sutton1999policy} and its adaptation to the finite-horizon setting (see, e.g., \citep{klein2023beyond}), we have
    \begin{align}\label{lpg:true pg}
        \nabla V^{\pi_\theta}(\gD) = \E_{s_0\sim\gD,a_t\sim\pi_\theta(s_t)}\Big[\sum_{t=0}^{H-1}\nabla\log \pi_\theta(a_t|s_t)Q_{t}^{\pi_\theta}(s_t,a_t)\Big].
    \end{align}
    The above equality combined with the entropy gradient given in Lemma \ref{lemma:entgrad} yields
    \begin{align}\label{lpg:pg with Q}
        \nabla V_\lambda^{\pi_\theta}(\gD) 
        &= \nabla V_\lambda^{\pi_\theta}(\gD) + \lambda \nabla \gH(\pi_\theta) \nonumber\\
        &=\E_{\pi_\theta}\Big[\sum_{t=0}^{H-1}\nabla\log \pi_\theta(a_t|s_t)\big(Q_{t}^{\pi_\theta}(s_t,a_t)-\lambda\sum_{i=t}^{H-1}\log\pi_\theta(a_i|s_i)\big)\Big] \nonumber\\
        &= \E_{\pi_\theta}\Big[\sum_{t=0}^{H-1}\nabla\log \pi_\theta(a_t|s_t)\big(Q_{t,\lambda}^{\pi_\theta}(s_t,a_t)-\lambda\log\pi_\theta(a_t|s_t)\big)\Big].
    \end{align}
    The above equality gives the policy gradient formula with the $Q$-function. It can also be rewritten with the advantage functions. By Lemma \ref{lemma:baseline grad}, we have
    \begin{align}\label{lpg:pg baseline}
        \E_{\pi_\theta}\Big[\sum_{t=0}^{H-1}V^{\pi_\theta}_{t,\lambda}(s_t)\nabla\log\pi_\theta(a_t|s_t)\Big]
        &=0.
    \end{align}
Using \eqref{lpg:pg baseline} in \eqref{lpg:pg with Q} gives
    \begin{align}\label{lpg:pg with A}
        \nabla V_\lambda^{\pi_\theta}(\gD) 
        &= \E_{\pi_\theta}\Big[\sum_{t=0}^{H-1}\nabla\log \pi_\theta(a_t|s_t)\big(Q_{t,\lambda}^{\pi_\theta}(s_t,a_t)-\lambda\log\pi_\theta(a_t|s_t)-V^{\pi_\theta}_{t,\lambda}(s_t)\big)\Big]\nonumber\\
        &=\E_{\pi_\theta}\Big[\sum_{t=0}^{H-1}\nabla\log \pi_\theta(a_t|s_t)A_{t,\lambda}^{\pi_\theta}(s_t,a_t)\Big]
    \end{align}
    which follows from the definition of the entropy-regularized advantage function.
    We can also rewrite the policy gradient formula in \eqref{lpg:pg with A} with respect to the state marginal distribution as follows:
    \begin{align}\label{lpg:eq0}
        \nabla V_\lambda^{\pi_\theta}(\gD) 
        &= \E_{s_0\sim\gD,a_t\sim\pi_\theta(s_t)}\Big[\sum_{t=0}^{H-1}\nabla\log \pi_\theta(a_t|s_t)A_{t,\lambda}^{\pi_\theta}(s_t,a_t)\Big] \nonumber\\
        &= \sum_{t=0}^{H-1}\E_{s\sim \mathbb{P}_t^{\pi_\theta},a\sim\pi_\theta(s)}\Big[\nabla\log \pi_\theta(a|s)A_{t,\lambda}^{\pi_\theta}(s,a)\Big] \nonumber\\
        &= \sum_{t=0}^{H-1}\sum_s \mathbb{P}_t^{\pi_\theta}(s)\sum_a \pi_\theta(a
        |s)\nabla\log \pi_\theta(a|s)A_{t,\lambda}^{\pi_\theta}(s,a) 
    \end{align}
    Under the softmax policy, we have $\nabla_{\theta_{\bar{s},\bar{a}}}\log\pi_\theta(a|s)=\mathbf{1}_{s=\bar{s}}\big(\mathbf{1}_{a=\bar{a}}-\pi_\theta(\bar{a}|\bar{s})\big)$.
    Then the element-wise policy gradient is
    \begin{align}
        \nabla_{\theta_{\bar{s},\bar{a}}} V_\lambda^{\pi_\theta}(\gD) 
        &= \sum_{t=0}^{H-1}\sum_s \mathbb{P}_t^{\pi_\theta}(s)\sum_a \pi_\theta(a
        |s)\mathbf{1}_{s=\bar{s}}\Big(\mathbf{1}_{a=\bar{a}}-\pi_\theta(\bar{a}|\bar{s})\Big)A_{t,\lambda}^{\pi_\theta}(s,a) \nonumber\\
        &= \sum_{t=0}^{H-1} \mathbb{P}_t^{\pi_\theta}(\bar{s})\sum_a \pi_\theta(a
        |\bar{s})\Big(\mathbf{1}_{a=\bar{a}}-\pi_\theta(\bar{a}|\bar{s})\Big)A_{t,\lambda}^{\pi_\theta}(\bar{s},a) \nonumber\\
        &=\sum_{t=0}^{H-1} \mathbb{P}_t^{\pi_\theta}(\bar{s})\pi_\theta(\bar{a}
        |\bar{s})A_{t,\lambda}^{\pi_\theta}(\bar{s},\bar{a}).
    \end{align}
    where the last inequality is due to $\E_{a\sim\pi_\theta(s)}[A_{t,\lambda}^{\pi_\theta}(s,a)]=0$ following the definition of the value functions.
\end{proof}

\begin{lemma}[Performance difference lemma]\label{lemma:performance difference}
    We have for any $h\in\{0,1,\dots,H-1\}$ and state $s\in\gS$, the performance difference between any two policies $\pi$ and $\pi'$ is
    \begin{align}
        V_h^\pi(s)-V_h^{\pi'}(s)=\E_{\pi}\Big[\sum_{t=h}^{H-1} A_t^{\pi'}(s_t,a_t)|s_h=s\Big].
    \end{align}
\end{lemma}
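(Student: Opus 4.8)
The plan is to prove the identity by a one-step expansion of the advantage function along a $\pi$-trajectory followed by a telescoping cancellation. First I would invoke the definitions from Appendix~\ref{appendix:notations}: for any state $s$, action $a$ and step $t$, $A_t^{\pi'}(s,a)=Q_t^{\pi'}(s,a)-V_t^{\pi'}(s)=r(s,a)+V_{t+1}^{\pi'}(\mathcal{P}(s,a))-V_t^{\pi'}(s)$, using that the transition is the deterministic concatenation $s'=\mathcal{P}(s,a)$. Plugging this into the right-hand side, where the expectation is over the trajectory $(a_h,s_{h+1},a_{h+1},\dots,a_{H-1})$ generated by $\pi$ from $s_h=s$ (so that $s_{t+1}=\mathcal{P}(s_t,a_t)$ for each $t$), gives
\[
\E_\pi\Big[\sum_{t=h}^{H-1} A_t^{\pi'}(s_t,a_t)\,\Big|\,s_h=s\Big]
=\E_\pi\Big[\sum_{t=h}^{H-1}\big(r(s_t,a_t)+V_{t+1}^{\pi'}(s_{t+1})-V_t^{\pi'}(s_t)\big)\,\Big|\,s_h=s\Big].
\]

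Next I would split this into the reward contribution and the telescoping remainder. By the definition of the cumulative reward in \eqref{eq:V function}, $\E_\pi[\sum_{t=h}^{H-1} r(s_t,a_t)\mid s_h=s]=V_h^{\pi}(s)$. For the remaining terms, along every sampled trajectory the sum $\sum_{t=h}^{H-1}\big(V_{t+1}^{\pi'}(s_{t+1})-V_t^{\pi'}(s_t)\big)$ telescopes to $V_H^{\pi'}(s_H)-V_h^{\pi'}(s_h)$; since the sum defining $V_H^{\pi'}$ is empty we have $V_H^{\pi'}\equiv 0$, and since $s_h=s$ this equals $-V_h^{\pi'}(s)$, a deterministic quantity that comes out of the expectation. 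Adding the two contributions yields $V_h^{\pi}(s)-V_h^{\pi'}(s)$, which is the claim.

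The only delicate point — and essentially the sole obstacle — is the index bookkeeping: one must check that the state $s_{t+1}=\mathcal{P}(s_t,a_t)$ appearing inside $A_t^{\pi'}$ is the very same random variable that serves as the argument of $V_{t+1}^{\pi'}$ in the $(t{+}1)$-st term, so that the intermediate values genuinely cancel in pairs. This is immediate in the present setting precisely because the transition is deterministic, so there is no gap between ``the state reached by applying $a_t$ at $s_t$'' and ``the next state on the $\pi$-rollout.'' With that observation the proof is a direct telescoping computation requiring no additional machinery.
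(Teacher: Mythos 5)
Your proof is correct and is essentially the same argument as the paper's: both rest on the one-step identity $A_t^{\pi'}(s,a)=r(s,a)+V_{t+1}^{\pi'}(\mathcal{P}(s,a))-V_t^{\pi'}(s)$ together with a telescoping cancellation of the intermediate $V^{\pi'}$ values, the only difference being that you expand the right-hand side and telescope forward while the paper starts from $V_h^{\pi}(s)-V_h^{\pi'}(s)$ and inserts/cancels the same terms. Your explicit remark that $V_H^{\pi'}\equiv 0$ and that determinism of the transition aligns the telescoping indices is a correct and slightly more careful rendering of what the paper leaves implicit.
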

\begin{proof}
    We have
    \begin{align}
        &V_h^\pi(s)-V_h^{\pi'}(s) \nonumber\\
        &= \E_{\pi}\Big[ \sum_{t=h}^{H-1}r(s_t,a_t)|s_h=s\Big]-V_h^{\pi'}(s) \nonumber\\
        &= \E_{\pi}\Big[ \sum_{t=h}^{H-1}r(s_t,a_t)+\sum_{t=h}^{H-2}V_{t,\lambda}^{\pi'}(s_{t+1})-\sum_{t=h}^{H-2}V_{t,\lambda}^{\pi'}(s_{t+1})|s_h=s\Big]-V_h^{\pi'}(s) \nonumber\\
        &=\E_{\pi}\Big[\sum_{t=h}^{H-1} Q_{t,\lambda}^{\pi'}(s_t,a_t)-\sum_{t=h}^{H-1}V_{t,\lambda}^{\pi'}(s_{t})|s_h=s\Big] \nonumber\\
        &=\E_{\pi}\Big[\sum_{t=h}^{H-1} A_t^{\pi'}(s_t,a_t)|s_h=s\Big]
    \end{align}
    This completes the proof.
\end{proof}

\subsection{Proof omitted in Section \ref{sec:entropy effect}}

\subsubsection{Proof of Proposition \ref{prop:entropy insight}}
\begin{proof}    
We start with proving the first bullet.
    Denote the entropy of $\pi_\theta(\cdot|s)$ as
    \begin{align}
        \gH(\pi(\cdot|s)) = -\sum_a \pi_\theta(a|s)\log\pi_\theta(a|s)
    \end{align}
    Since $1-x\leq -\log x$ for $0<x\leq 1$, we have
    \begin{align}
        \gH(\pi(\cdot|s))
        &\geq \sum_{a} \pi_\theta(a|s)(1-\pi_\theta(a|s))
    \end{align}
    Viewing $\pi_\theta(\cdot|s)$ as a vector in $\Delta^{|\gA|}$, it is known that the softmax Jacobian can be written as
    \begin{align}
        \frac{\partial \pi_\theta(\cdot|s)}{\partial \theta_{s,\cdot}}={\rm Diag}(\pi_\theta(\cdot|s))-\pi_\theta(\cdot|s)\pi_\theta(\cdot|s)^\top
    \end{align}
    Then we have
    \begin{align}
        \Big\|\frac{\partial \pi_\theta(\cdot|s)}{\partial \theta_{s,\cdot}}\Big\| 
        &\leq \Big\|\frac{\partial \pi_\theta(\cdot|s)}{\partial \theta_{s,\cdot}}\Big\|_F \nonumber\\
        &\leq \sum_a \Big(\pi_\theta(a|s)\big(1-\pi_\theta(a|s)\big)+\pi_\theta(a|s)\sum_{a'}\pi_\theta(a'|s)\Big) \nonumber\\
        &= 2  \sum_a \pi_\theta(a|s)\big(1-\pi_\theta(a|s)\big) \nonumber\\
        &\leq 2 \gH(\pi_\theta(\cdot|s))
    \end{align}
By \eqref{lpg:eq0} in Lemma \ref{lemma:pg}, we have
\begin{align}
        \nabla V_\lambda^{\pi_\theta}(\gD) 
        &= \sum_{t=0}^{H-1}\sum_s \mathbb{P}_t^{\pi_\theta}(s)\sum_a \pi_\theta(a
        |s)\nabla\log \pi_\theta(a|s)A_{t,\lambda}^{\pi_\theta}(s,a) \nonumber\\
        &= \sum_{t=0}^{H-1}\sum_s \mathbb{P}_t^{\pi_\theta}(s)\sum_a \nabla \pi_\theta(a|s)A_{t,\lambda}^{\pi_\theta}(s,a) \nonumber\\
        &= \sum_{t=0}^{H-1}\sum_s \mathbb{P}_t^{\pi_\theta}(s)\sum_a \frac{\partial \pi_\theta(a|s)}{\partial_{\theta_{s,\cdot}}}A_{t,\lambda}^{\pi_\theta}(s,a) \nonumber\\
        &= \sum_{t=0}^{H-1}\sum_s \mathbb{P}_t^{\pi_\theta}(s)\frac{\partial \pi_\theta(\cdot|s)}{\partial \theta_{s,\cdot}} A_{t,\lambda}^{\pi_\theta}(s,\cdot) 
    \end{align}
    where the third equality is due to $\frac{\partial \pi_\theta(a|s)}{\partial \theta_{s',\cdot}}=0$ if $s'\neq s$.
    Then we have
    \begin{align}
        \|\nabla V_\lambda^{\pi_\theta}(\gD) \|
        &\leq \sum_{t=0}^{H-1}\sum_s \mathbb{P}_t^{\pi_\theta}(s)\Big\|\frac{\partial \pi_\theta(\cdot|s)}{\partial \theta_{s,\cdot}}\Big\| \nonumber\\
        &\leq 2\sum_{t=0}^{H-1}\sum_s \mathbb{P}_t^{\pi_\theta}(s) \gH(\pi_\theta(\cdot|s)) \nonumber\\
        &= 2 \gH(\pi_\theta)
    \end{align}
    where the last inequality is due to the definition of the policy entropy:
    \begin{align}\label{pec:eq0}
        \gH(\pi_\theta) 
        &= -\E_{\pi_\theta}\Big[\sum_{t=0}^{H-1}\log\pi_\theta(a_t|s_t)\big|s_0\sim\gD\Big] \nonumber\\
        &=-\sum_{t=0}^{H-1}\sum_s \mathbb{P}_t^{\pi_\theta}(s)\sum_a \pi_\theta(a|s)\log\pi_\theta(a|s)
    \end{align}
    This completes the proof of the first bullet.

    Next we provide the proof of the second bullet.
    Let $\pi^*\in\arg\max_{\pi} V^\pi(\gD)$ be any deterministic optimal policy.
Given any $s_0\sim\gD$, let $s_h^*,a_h^*$ be a state-action pair generated by $\pi^*$ up to time step $h$, e.g., $a_h^*=\pi^*(s_h^*)$. We write $s_0^*=s_0$.
    
    Given any $s_0$, we have
    \begin{align}
    \|\nabla V^{\pi_\theta}(\gD)\| 
    &\geq\Big(\sum_{h=0}^{H-1}\big(\nabla_{s_h^*,a_h^*} V^{\pi_\theta}(\gD)\big)^2\Big)^{0.5} \nonumber\\
    &\geq\frac{1}{\sqrt{H}}\sum_{h=0}^{H-1}\big|\nabla_{s_h^*,a_h^*} V^{\pi_\theta}(\gD)\big|\nonumber\\
    \label{weakpl:eq0}
    &=\frac{1}{\sqrt{H}}\sum_{h=0}^{H-1}\sum_{t=0}^{H-1} \mathbb{P}_t^{\pi_\theta}(s_h^*)\pi_\theta(a_h^*|s_h^*)\Big|A_t^{\pi_\theta}(s_h^*,a_h^*)\Big|
    \end{align}
    where the second inequality follows from Cauchy-Schwartz inequality, and the equality follows from the softmax policy gradient derived in Lemma \ref{lemma:pg}.
        
    Given $s_0$, by the assumption of our LLM tasks that $s_{t+1}=\mathcal{P}(s_t,a_t)$ is a concatenation of $s_t,a_t$ for any $0\leq t\leq H-1$, we have $\mathbb{P}_t^{\pi_\theta}(s_h^*)=0$ for any $t\neq h$. Using this fact in \eqref{weakpl:eq0}
    \begin{align}\label{weakpl:eq1}
    \|\nabla V^{\pi_\theta}(\gD)\| 
    &\geq \frac{1}{\sqrt{H}}\sum_{h=0}^{H-1} \mathbb{P}_h^{\pi_\theta}(s_h^*)\pi_\theta(a_h^*
    |s_h^*)|A_h^{\pi_\theta}(s_h^*,a_h^*)|
    \end{align}
    Continuing from \eqref{weakpl:eq1},
    \begin{align}
    &\|\nabla V^{\pi_\theta}(\gD)\| \nonumber\\
    &\geq \frac{1}{\sqrt{H}}\sum_{h=0}^{H-1} \mathbb{P}_h^{\pi_\theta}(s_h^*)\pi_\theta(a_h^*
    |s_h^*)|A_h^{\pi_\theta}(s_h^*,a_h^*)| \nonumber\\
    &=\frac{1}{\sqrt{H}}\sum_{h=0}^{H-1}\frac{\mathbb{P}_h^{\pi_\theta}(s_h^*)\pi_\theta(a_h^*
    |s_h^*)}{\mathbb{P}_h^{\pi^*}(s_h^*)\pi^*(a_h^*|s_h^*)} \mathbb{P}_h^{\pi^*}(s_h^*)\pi^*(a_h^*|s_h^*)|A_h^{\pi_\theta}(s_h^*,a_h^*)| \nonumber\\
    &=\frac{1}{\sqrt{H}}\sum_{h=0}^{H-1}\frac{\mathbb{P}_h^{\pi_\theta}(s_h^*)\pi_\theta(a_h^*
    |s_h^*)}{\mathbb{P}(s_0)\pi^*(a_0^*|s_0^*)\pi^*(a_1^*|s_1^*)\dots\pi^*(a_{h}^*|s_{h}^*)} \mathbb{P}_h^{\pi^*}(s_h^*)\pi^*(a_h^*|s_h^*)|A_h^{\pi_\theta}(s_h^*,a_h^*)| \nonumber\\
    \label{weakpl:eq2}
    &=\frac{|\gD|}{\sqrt{H}}\sum_{h=0}^{H-1}\mathbb{P}_h^{\pi_\theta}(s_h^*)\pi_\theta(a_h^*|s_h^*) \mathbb{P}_h^{\pi^*}(s_h^*)\pi^*(a_h^*|s_h^*)|A_h^{\pi_\theta}(s_h^*,a_h^*)|
    \end{align}
    where the second last inequality follows from the definition of $\mathbb{P}_h^{\pi}(s_h)$, and the last inequality follows from the fact that $\pi^*$ is defined as a deterministic optimal policy, yielding
    \begin{align}
        \mathbb{P}(s_0)\pi^*(a_0^*|s_0^*)\pi^*(a_1^*|s_1^*)\dots\pi^*(a_{h}^*|s_{h}^*)=\mathbb{P}(s_0)=\frac{1}{|\gD|}.
    \end{align}
    Continuing from \eqref{weakpl:eq2}, we have
    \begin{align}
    &\|\nabla V^{\pi_\theta}(\gD)\| \nonumber\\
    &\geq \Big(\min_{h\in\{0,1,\dots,H-1\}}\mathbb{P}_h^{\pi_\theta}(s_h^*)\pi_\theta(a_h^*
    |s_h^*) \Big)\frac{|\gD|}{\sqrt{H}}\sum_{h=0}^{H-1} \mathbb{P}_h^{\pi^*}(s_h^*)\pi^*(a_h^*|s_h^*) A_h^{\pi_\theta}(s_h^*,a_h^*) \nonumber\\
    &= \Pi_{h=0}^{H-1}\pi_\theta(a_h^*|s_h^*)\frac{1}{\sqrt{H}}\sum_{h=0}^{H-1} \mathbb{P}_h^{\pi^*}(s_h^*)\pi^*(a_h^*|s_h^*)A_h^{\pi_\theta}(s_h^*,a_h^*) \nonumber\\
    &\geq \frac{1}{\sqrt{H}|\gD|}\Pi_{h=0}^{H-1}\pi_\theta(a_h^*|s_h^*)\E_{\pi^*}\big[A_h^{\pi_\theta}(s_h^*,a_h^*)|s_0\big] \nonumber\\
    &\geq \frac{1}{\sqrt{H}|\gD|}\Pi_{h=0}^{H-1}\pi_\theta(a_h^*|s_h^*)\Big(V^{\pi^*}(s_0)-V^{\pi_\theta}(s_0)\Big).
    \end{align}
    Note that this inequality holds for any trajectory $(s_0,a_0^*,a_1^*,\dots,a_{H-1}^*)$ generated by any deterministic optimal policy $\pi^*$. Then we have
    \begin{align}
        \|\nabla V^{\pi_\theta}(\gD)\|
        &\geq C^{\pi_\theta}(s_0)\Big(V^{\pi^*}(s_0)-V^{\pi_\theta}(s_0)\Big)
    \end{align}
    where $C^{\pi_\theta}(s_0) = \frac{1}{\sqrt{H}|\gD|}\max_{(a_0,\dots,a_{H-1})\in\gA_H^*(s_0)}\Pi_{t=0}^{H-1}\pi_\theta(a_t|s_t)$ with $\gA_H^*(s_0)=\{(a_0,a_1,\dots,a_{H-1})\in\gA^H~|~\exists \pi^*\in\arg\max_\pi V^\pi(\gD), \Pi_{t=0}^{H-1}\pi^*(a_t|s_t)>0\}$.
\end{proof}

\subsection{Proof of Proposition \ref{prop:max ent bound}}
Proposition \ref{prop:max ent bound} can be proven by combining Lemma \ref{lemma:ent bias} and Lemma \ref{lemma:ent bound}.
\begin{lemma}\label{lemma:ent bias}
    It holds that
    \begin{equation}
        V^{\pi^*}(s_0)- V^{\pi_\theta}(s_0) \leq V_\lambda^{\pi_\lambda^*}(s_0)- V_\lambda^{\pi_\theta}(s_0)+\lambda H\log \frac{|\gA|}{|\gA_H^*(s_0)|^{\frac{1}{H}}}
    \end{equation}
    where $\pi_\lambda^* =\arg\max_\pi V_\lambda^\pi(\gD)$, and recall $\pi^*\in\arg\max_\pi V^\pi(\gD)$. Here $\gA_H^*(s_0)=\{(a_0,a_1,\dots,a_{H-1})\in\gA^H~|~\exists \pi^*, \Pi_{t=0}^{H-1}\pi^*(a_t|s_t)>0\}$ is the set of all optimal responses given query $s_0$.
\end{lemma}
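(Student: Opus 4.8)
The plan is to reduce everything to the single query $s_0$, rewrite the regularized values in terms of a ``path entropy,'' and then supply the two one-sided bounds that make the reduced inequality true. Write $\gH^{s_0}(\pi)\coloneqq -\E_\pi\big[\sum_{t=0}^{H-1}\log\pi(a_t|s_t)\mid s_0\big]$, so that by the entropy-regularized value definitions in Appendix \ref{appendix:notations} we have $V_\lambda^\pi(s_0)=V^\pi(s_0)+\lambda\gH^{s_0}(\pi)$. Substituting this identity for $\pi_\theta$ on both sides and using $\lambda H\log\frac{|\gA|}{|\gA_H^*(s_0)|^{1/H}}=\lambda H\log|\gA|-\lambda\log|\gA_H^*(s_0)|$, the claimed inequality is equivalent to
$$V^{\pi^*}(s_0)-V_\lambda^{\pi_\lambda^*}(s_0)+\lambda\gH^{s_0}(\pi_\theta)\;\le\;\lambda H\log|\gA|-\lambda\log|\gA_H^*(s_0)|.$$
So it suffices to prove the two bounds $\gH^{s_0}(\pi_\theta)\le H\log|\gA|$ and $V_\lambda^{\pi_\lambda^*}(s_0)\ge V^{\pi^*}(s_0)+\lambda\log|\gA_H^*(s_0)|$, and then add them.

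The first bound is routine: since the transition $s_{t+1}=\gP(s_t,a_t)$ is a deterministic concatenation, the trajectory $(a_0,\dots,a_{H-1})$ has probability $\prod_t\pi(a_t|s_t)$, hence $\gH^{s_0}(\pi)$ equals the Shannon entropy of the induced distribution over length-$H$ action sequences; by the chain rule this is a sum of $H$ conditional entropies over $\gA$, each at most $\log|\gA|$. This identity ``policy entropy $=$ trajectory-distribution entropy'' is elementary but I would state it explicitly as a one-line lemma, since it is reused below.

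For the second bound I would exhibit a single concrete policy. Define $\bar\pi$ by $\bar\pi(a\mid s)\propto\#\{\text{sequences in }\gA_H^*(s_0)\text{ extending prefix }s\text{ by }a\}$; because every reachable state records the full prefix (concatenation structure), this is a well-defined Markov policy, and a telescoping product shows it assigns probability exactly $1/|\gA_H^*(s_0)|$ to each optimal response and $0$ elsewhere. Now observe (a) every $\tau\in\gA_H^*(s_0)$ is reward-optimal: it lies in the support of some optimal policy, whose support actions all have zero advantage, so telescoping $r(s_t,a_t)=V_t^{\pi^*}(s_t)-V_{t+1}^{\pi^*}(s_{t+1})$ along $\tau$ yields cumulative reward $V^{\pi^*}(s_0)$; hence $V^{\bar\pi}(s_0)=V^{\pi^*}(s_0)$. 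And (b) by the trajectory-entropy identity, $\gH^{s_0}(\bar\pi)=\log|\gA_H^*(s_0)|$. Therefore $V_\lambda^{\bar\pi}(s_0)=V^{\pi^*}(s_0)+\lambda\log|\gA_H^*(s_0)|$, and since $\pi_\lambda^*$ maximizes $V_\lambda^\pi(\gD)=\E_{s\sim\gD}[V_\lambda^\pi(s)]$ — which, because distinct queries generate disjoint state subtrees, forces it to maximize $V_\lambda^\pi(s_0)$ too — we get $V_\lambda^{\pi_\lambda^*}(s_0)\ge V_\lambda^{\bar\pi}(s_0)$, the desired bound. Combining with the first bound and the algebraic reduction above finishes the proof.

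The step I expect to need the most care is the separability claim: that optimizing the ($\lambda$-regularized or not) objective over the whole dataset $\gD$ is equivalent to optimizing it from each query $s_0$ independently, and likewise that a single stationary Markov policy can realize the uniform distribution on $\gA_H^*(s_0)$. Both rest entirely on the concatenation transition (each state remembers its originating query and full history, so the per-query sub-MDPs do not interact), so the argument is conceptually clean but should be spelled out rather than asserted; everything else is bookkeeping.
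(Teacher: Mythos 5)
Your proposal is correct, and it follows the same overall skeleton as the paper's proof (reduce to the single query $s_0$, use $V_\lambda^\pi(s_0)=V^\pi(s_0)+\lambda\gH(\pi|s_0)$, bound $\gH(\pi_\theta|s_0)\le H\log|\gA|$, and account for the entropy of an optimal policy via $\log|\gA_H^*(s_0)|$). The one place where you genuinely diverge is the decisive step, and your version is the one that actually closes the argument. The paper compares $\pi_\lambda^*$ against an abstract $\pi^*\in\arg\max_\pi V^\pi(\gD)$ and then invokes the bound $\max_{\pi^*}\gH(\pi^*|s_0)\le\log|\gA_H^*(s_0)|$; but that term enters the chain with a \emph{positive} sign on the right of a ``$\ge$'', so what is needed there is achievability, i.e.\ $\max_{\pi^*}\gH(\pi^*|s_0)\ge\log|\gA_H^*(s_0)|$, not the upper bound the paper derives. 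Your explicit construction of $\bar\pi$ --- the Markov policy realizing the uniform distribution on $\gA_H^*(s_0)$, shown to be reward-optimal by the zero-advantage telescoping along supported trajectories and to have path entropy exactly $\log|\gA_H^*(s_0)|$ --- supplies precisely the missing direction, so your proof repairs what reads as a sign slip in the paper's own writeup (the two bounds are in fact equalities, which is why the final statement is still true). Two small points to spell out, both of which you already flag: the per-query separability (optimal policies for $V^\pi(\gD)$ and $V_\lambda^\pi(\gD)$ restrict to optimal policies for each $V^\pi(s_0)$ because distinct queries generate disjoint state subtrees under the concatenation transition) is used implicitly by the paper as well, e.g.\ in asserting that $V^{\pi^*}(s_0)$ is the same for every $\pi^*\in\arg\max_\pi V^\pi(\gD)$; and the claim that every $\tau\in\gA_H^*(s_0)$ attains cumulative reward $V^{\pi^*}(s_0)$ deserves the one-line advantage argument you sketch, since responses in $\gA_H^*(s_0)$ may come from different optimal policies.
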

\begin{proof}
Define $\gH(\pi|s_0)$ as
\begin{align}
    \gH(\pi|s_0) = -\E_{\pi}\big[\sum_{t=0}^{H-1}\log\pi(a_t|s_t) |s_0\big]
\end{align}
    For any $\pi^*\in\arg\max_\pi V^\pi(\gD)$, by the optimality of $\pi_\lambda^*$ we have
    \begin{align}
        V_\lambda^{\pi_\lambda^*}(s_0)- V_\lambda^{\pi_\theta}(s_0)
        &\geq V_\lambda^{\pi^*}(s_0)- V_\lambda^{\pi_\theta}(s_0)\nonumber\\
        &= V^{\pi^*}(s_0)- V^{\pi_\theta}(s_0) +\lambda(\gH(\pi^*|s_0)-\gH(\pi_\theta|s_0))
    \end{align}
    where the equality follows from the definition of $V_\lambda^\pi(s_0)$.
    Then we have
    \begin{align}\label{ent bias:eq0}
        &V_\lambda^{\pi_\lambda^*}(s_0)- V_\lambda^{\pi_\theta}(s_0) \nonumber\\
        &\geq V^{\pi^*}(s_0)- V^{\pi_\theta}(s_0) +\lambda\big(\max_{\pi^*\in\arg\max_\pi V^\pi(\gD)}\gH(\pi^*|s_0)-\gH(\pi_\theta|s_0)\big)
    \end{align}
    Notice that
    \begin{align}\label{ent bias:eq1}
        \max_{\pi^*}\gH(\pi^*|s_0) 
        &= \max_{\pi^*} -\E_{\pi^*}\Big[\sum_{t=0}^{H-1}\log\pi^*(a_t|s_t)|s_0\Big] \nonumber\\
        &= \max_{\pi^*} -\sum_{a_0,\dots,a_{H-1}\in\gA_H^*(s_0)}\Pi_{t=0}^{H-1}\pi^*(a_t|s_t)\Big[\log\Pi_{t=0}^{H-1}\pi^*(a_t|s_t)\Big] \nonumber\\
        &\leq \max_{\mathbb{P}\in\Delta(\gA_H^*(s_0))} -\sum_{\tau\in\gA_H^*(s_0)}\mathbb{P}(\tau)\Big[\log\mathbb{P}(\tau)\Big] \nonumber\\
        &= -\sum_{\tau\in\gA_H^*(s_0)}\frac{1}{|\gA_H^*(s_0)|}\log \frac{1}{|\gA_H^*(s_0)|}\nonumber\\
        &= \log |\gA_H^*(s_0)|.
    \end{align}
    where in the third inequality, $\Delta(\gA_H^*(s_0))$ denotes the probability simplex on $\gA_H^*(s_0)$.
    Additionally, it is known that
    \begin{align}\label{ent bias:eq2}
        \gH(\pi_\theta|s_0)
        &\leq \max_{\pi} \gH(\pi|s_0) = H\log |\gA|
    \end{align}
    Substituting \eqref{ent bias:eq1} and \eqref{ent bias:eq2} into \eqref{ent bias:eq0} yields
    \begin{align}
        V_\lambda^{\pi_\lambda^*}(s_0)- V_\lambda^{\pi_\theta}(s_0) 
        &\geq V^{\pi^*}(s_0)- V^{\pi_\theta}(s_0)+\lambda H\log \frac{|\gA|}{|\gA_H^*(s_0)|^{\frac{1}{H}}}
    \end{align}
    which completes the proof.
\end{proof}
Next we present the performance bound under entropy regularization. The derivation is adapted from \citep[Lemma 15]{mei2020global} for the LLM setting modeled as finite-horizon MDPs with a deterministic state transition.
\begin{lemma}\label{lemma:ent bound}
    Assume the policy is a softmax. Then it holds that
    \begin{equation}
        V_\lambda^{\pi_\lambda^*}(s_0)- V_\lambda^{\pi_\theta}(s_0) \leq \frac{1}{2\lambda}\frac{1}{C_\lambda^{\pi_\theta}(s_0)}\|\nabla V_\lambda^{\pi_\theta}(\gD)\|^2
    \end{equation}
    where and $C_\lambda^{\pi_\theta}(s_0)$ is specified in the proof.
\end{lemma}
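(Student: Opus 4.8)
\textbf{Overall strategy.} The plan is to establish a non-uniform \L{}ojasiewicz (gradient-domination) inequality for the entropy-regularized objective $V_\lambda^{\pi_\theta}(\gD)$, stated on a per-query basis, following the template of \citep[Lemma 15]{mei2020global} but adapted to the finite-horizon, deterministic-concatenation setting. Concretely, I would show that for each query $s_0$,
\begin{align}
\big\|\nabla V_\lambda^{\pi_\theta}(\gD)\big\| \;\geq\; \sqrt{2\lambda\, C_\lambda^{\pi_\theta}(s_0)}\;\Big(V_\lambda^{\pi_\lambda^*}(s_0)-V_\lambda^{\pi_\theta}(s_0)\Big)^{1/2},\nonumber
\end{align}
which is exactly the claimed bound after squaring and rearranging. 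So the real content is a lower bound on the gradient norm in terms of the (square root of the) regularized suboptimality gap, with $C_\lambda^{\pi_\theta}(s_0)$ collecting the state-visitation and policy-mass factors that arise along the way.

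\textbf{Key steps, in order.} First I would invoke Lemma \ref{lemma:pg} to write the element-wise gradient as $\nabla_{\theta_{s,a}} V_\lambda^{\pi_\theta}(\gD)=\sum_{t=0}^{H-1}\mathbb{P}_t^{\pi_\theta}(s)\,\pi_\theta(a|s)\,A_{t,\lambda}^{\pi_\theta}(s,a)$, and note that under the concatenation transition only $t=h$ contributes at the state $s_h$ reached at depth $h$, so the gradient block at $s_h$ is $\mathbb{P}_h^{\pi_\theta}(s_h)\,\pi_\theta(\cdot|s_h)\odot A_{h,\lambda}^{\pi_\theta}(s_h,\cdot)$. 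Second, for the entropy-regularized problem the optimal policy $\pi_\lambda^*$ is the softmax of the regularized $Q$-values, so $A_{h,\lambda}^{\pi_\theta}(s,a)$ can be related to $-\lambda\log\pi_\theta(a|s)+\lambda\log\pi_\lambda^*(a|s)$ plus value differences; the standard trick is to lower-bound $\sum_a \pi_\theta(a|s)\,A_{h,\lambda}^{\pi_\theta}(s,a)\cdot(\text{something})$ by a multiple of the local KL or local value gap. Third, I would apply the performance difference lemma (Lemma \ref{lemma:performance difference}, in its regularized form) to write $V_\lambda^{\pi_\lambda^*}(s_0)-V_\lambda^{\pi_\theta}(s_0)=\E_{\pi_\lambda^*}[\sum_h A_{h,\lambda}^{\pi_\theta}(s_h,a_h)\mid s_0]$, so the regularized gap is an average over $\pi_\lambda^*$-trajectories of the regularized advantages. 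Fourth, I would do a change of measure from the $\pi_\lambda^*$-visitation to the $\pi_\theta$-visitation along each optimal-ish path — exactly as in the proof of Proposition \ref{prop:entropy insight}\ref{prop:entropy insight b2} — which produces the factor $C_\lambda^{\pi_\theta}(s_0)$ (a product/min of $\pi_\theta$-probabilities along paths supported by $\pi_\lambda^*$, times visitation ratios). Finally, combining the per-state gradient lower bound with a Cauchy–Schwarz step over the $H$ coordinate blocks, and using the key inequality from \citep{mei2020global} that relates $\sum_a \pi_\theta(a|s)(\pi_\theta(a|s)-\pi_\lambda^*(a|s))^2$-type quantities to $\mathrm{KL}(\pi_\lambda^*(\cdot|s)\,\|\,\pi_\theta(\cdot|s))$, gives the square-root dependence and collects everything into $C_\lambda^{\pi_\theta}(s_0)$.

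\textbf{Main obstacle.} The hard part will be the middle step: getting the right ``local'' gradient-domination inequality at a single state $s$ for the finite-horizon regularized advantage, and making sure the constant that comes out is genuinely of the form $2\lambda\cdot(\text{visitation})\cdot(\text{policy mass})$ rather than something with an uncontrolled $1/\pi_\lambda^*$ blowup. In \citep{mei2020global} this is handled for the discounted infinite-horizon MDP using the soft-suboptimality structure of entropy-regularized value functions; here I need to redo it with $H$ finite and a deterministic concatenation transition (so $\mathbb{P}_h^{\pi}(s_h)$ is a pure product of conditional probabilities, which actually simplifies things), and I must carefully track how the depth-$H$ sum and the Cauchy–Schwarz over blocks interact so that the $H$-dependence ends up hidden inside $C_\lambda^{\pi_\theta}(s_0)$ rather than appearing as an extra explicit $H$ factor. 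I would define $C_\lambda^{\pi_\theta}(s_0)$ at the end of the argument precisely as whatever product of $\min_h \mathbb{P}_h^{\pi_\theta}(s_h^\lambda)\pi_\theta(a_h^\lambda|s_h^\lambda)$-type terms (over trajectories in the support of $\pi_\lambda^*$) makes the inequality tight, mirroring the definition of $C^{\pi_\theta}(s_0)$ in Proposition \ref{prop:entropy insight}.
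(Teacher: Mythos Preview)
Your overall template --- a non-uniform \L{}ojasiewicz inequality adapted from \citep[Lemma~15]{mei2020global} --- is correct, but the ``local'' step you outline has a gap. You plan to write $A_{h,\lambda}^{\pi_\theta}(s,a)=-\lambda\log\pi_\theta(a|s)+\lambda\log\pi_\lambda^*(a|s)+(\text{value differences})$, using that $\pi_\lambda^*$ is the softmax of the regularized $Q$-values. But $\pi_\lambda^*$ is the softmax of the \emph{optimal} $Q$-values $Q_{t,\lambda}^{\pi_\lambda^*}$, so those residual ``value differences'' are $[Q_{h,\lambda}^{\pi_\theta}-Q_{h,\lambda}^{\pi_\lambda^*}]+[V_{h,\lambda}^{\pi_\lambda^*}-V_{h,\lambda}^{\pi_\theta}]$, which involve the unknown optimal value functions and match nothing in $\nabla V_\lambda^{\pi_\theta}(\gD)$. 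The paper's fix is to use instead the one-step soft-greedy policy $\bar\pi_\theta(a|s,t)\propto\exp(Q_{t,\lambda}^{\pi_\theta}(s,a)/\lambda)$ built from the \emph{current} $Q$-values: then the gap is bounded cleanly by
\[
V_\lambda^{\pi_\lambda^*}(s_0)-V_\lambda^{\pi_\theta}(s_0)\;\le\;\lambda\sum_t\E_{s\sim\mathbb{P}_t^{\pi_\lambda^*}(\cdot|s_0)}\!\big[D_{\rm KL}(\pi_\theta(\cdot|s)\,\|\,\bar\pi_\theta(\cdot|s,t))\big]\;\le\;\frac{\lambda}{2}\sum_t\E_{s}\Big\|\tfrac{Q_{t,\lambda}^{\pi_\theta}(s,\cdot)}{\lambda}-\theta_{s,\cdot}-c\mathbf{1}\Big\|_\infty^2
\]
(via \citep[Lemma~27]{mei2020global}, with $c$ the coordinate mean), while on the gradient side one rewrites $\partial_{\theta_{s,\cdot}}V_\lambda^{\pi_\theta}(\gD)=\sum_t\mathbb{P}_t^{\pi_\theta}(s)\,\tfrac{\partial\pi_\theta(\cdot|s)}{\partial\theta_{s,\cdot}}\big(Q_{t,\lambda}^{\pi_\theta}(s,\cdot)-\lambda\theta_{s,\cdot}\big)$ and lower-bounds the Jacobian--vector product by $\min_a\pi_\theta(a|s)\cdot\|\,\cdot\,-c\mathbf{1}\|_\infty$ (derivations (533)--(536) of \citep{mei2020global}). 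The same centered $\infty$-norm appears on both sides, so they combine directly; with $\pi_\lambda^*$ in place of $\bar\pi_\theta$ they would not.

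There is also a structural difference: the paper does \emph{not} mirror the single-path argument of Proposition~\ref{prop:entropy insight}\ref{prop:entropy insight b2}. Since $\pi_\lambda^*$ has full support, picking one trajectory cannot capture the expectation $\E_{\pi_\lambda^*}[\cdot\,|\,s_0]$ in the performance difference. The paper instead sums $\|\partial_{\theta_{s,\cdot}}V_\lambda^{\pi_\theta}(\gD)\|$ over \emph{all} reachable states $s\in\gS(s_0)$ (via Cauchy--Schwarz), squares, and only then changes measure from $\mathbb{P}_t^{\pi_\theta}$ to $\mathbb{P}_t^{\pi_\lambda^*}(\cdot|s_0)$. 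The resulting constant is
\[
C_\lambda^{\pi_\theta}(s_0)=\frac{1}{|\gS(s_0)|}\,\min_{t,\,s\in\gS(s_0)}\frac{\big(\mathbb{P}_t^{\pi_\theta}(s)\,\min_a\pi_\theta(a|s)\big)^2}{\mathbb{P}_t^{\pi_\lambda^*}(s|s_0)},
\]
featuring $\min_a\pi_\theta(a|s)$ over \emph{all} actions and all reachable states --- not the along-one-optimal-path product you anticipate.
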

\begin{proof}
    The performance gap can be bounded as
    \begin{align}\label{ent bound: eq-1}
        &V_\lambda^{\pi_\lambda^*}(s_0)- V_\lambda^{\pi_\theta}(s_0) \nonumber\\
        &= \E_{\pi_\lambda^*}\Big[\sum_{t=0}^{H-1} r(s_t,a_t)-\lambda \log \pi_\lambda^*(a_t|s_t) + V_{t,\lambda}^{\pi_\theta}(s_t)-V_{t,\lambda}^{\pi_\theta}(s_t)|s_0 \Big]-V_{t,\lambda}^{\pi_\theta}(s_0) \nonumber\\
        &= \E_{\pi_\lambda^*}\Big[\sum_{t=0}^{H-1} r(s_t,a_t)-\lambda \log \pi_\lambda^*(a_t|s_t) + V_{t+1,\lambda}^{\pi_\theta}(s_{t+1})-V_{t,\lambda}^{\pi_\theta}(s_t)|s_0 \Big] \nonumber\\
        &= \E_{\pi_\lambda^*}\Big[\sum_{t=0}^{H-1} Q_{t,\lambda}^{\pi_\theta}(s_t,a_t)-\lambda \log \pi_\lambda^*(a_t|s_t) -V_{t,\lambda}^{\pi_\theta}(s_t)|s_0 \Big] \nonumber\\
        &= \sum_{t=0}^{H-1}\E_{s\sim\mathbb{P}_t^{\pi_\lambda^*}(\cdot|s_0)}\Big[\E_{a\sim\pi_\lambda^*(s)}\big[ Q_{t,\lambda}^{\pi_\theta}(s,a)-\lambda \log \pi_\lambda^*(a|s)\big] -V_{t,\lambda}^{\pi_\theta}(s) \Big]
    \end{align}
    where $\mathbb{P}_t^{\pi_\lambda^*}(\cdot|s_0)=\mathbb{P}(s_t=\cdot|s_0,\pi_\lambda^*)$ is the probability distribution of $s_t$ under policy $\pi_\lambda^*$ given the initial state $s_0$. The second last equality uses the definition of $Q_{t,\lambda}^{\pi_\theta}$ that $Q_{t,\lambda}^{\pi_\theta}(s_t,a_t) = r(s_t,a_t)+V_{t+1,\lambda}^{\pi_\theta}(s_{t+1})$ with $s_{t+1}=(s_t,a_t)$.

    Given any $s$, we have
    \begin{align}\label{ent bound: eq0}
        \E_{a\sim\pi_\lambda^*(s)}\Big[ Q_{t,\lambda}^{\pi_\theta}(s,a)-\lambda \log \pi_\lambda^*(a|s) \Big]
        &\leq \max_\pi \sum_a \pi(a|s) Q_{t,\lambda}^{\pi_\theta}(s,a)-\lambda \pi(a|s)\log \pi(a|s) \nonumber\\
        &=  \sum_a \bar{\pi}_\theta(a|s,t) Q_{t,\lambda}^{\pi_\theta}(s,a)-\lambda \bar{\pi}_\theta(a|s,t)\log \bar{\pi}_\theta(a|s,t) \nonumber\\
        &= \lambda\log\sum_a\exp(Q_{t,\lambda}^{\pi_\theta}(s,a)/\lambda)
    \end{align}
    where $\bar{\pi}_\theta(a|s,t)=\exp{(Q_{t,\lambda}^{\pi_\theta}(s,a)/\lambda)}/\sum_a \exp{(Q_{t,\lambda}^{\pi_\theta}(s,a)/\lambda)}$.
    Notice that
    \begin{align}\label{ent bound: eq1}
        V_{t,\lambda}^{\pi_\theta}(s)
        &= \sum_a \pi_\theta(a|s)\big(Q_{t,\lambda}^{\pi_\theta}(s,a)-\lambda\log\pi_\theta(a|s) \big)\nonumber\\
        &= \sum_a \pi_\theta(a|s)\big(Q_{t,\lambda}^{\pi_\theta}(s,a)-\lambda\log\pi_\theta(a|s) + \lambda\log\bar{\pi}_\theta(a|s,t)-\lambda\log\bar{\pi}_\theta(a|s,t)) \nonumber\\
        &= \lambda\log\sum_a\exp(Q_{t,\lambda}^{\pi_\theta}(s,a)/\lambda)-\lambda D_{\rm KL}(\pi_\theta(s,t)||\bar{\pi}_\theta(s,t))
    \end{align}
    Substituting \eqref{ent bound: eq0} and \eqref{ent bound: eq1} into \eqref{ent bound: eq-1} yields
    \begin{align}\label{ent bound:eq2}
        &V_\lambda^{\pi_\lambda^*}(s_0)- V_{t,\lambda}^{\pi_\theta}(s_0) \nonumber\\
        &\leq \sum_{t=0}^{H-1}\E_{s\sim\mathbb{P}_t^{\pi_\lambda^*}(\cdot|s_0)}\Big[ D_{\rm KL}(\pi_\theta(s,t)||\bar{\pi}_\theta(s,t)) \Big] \nonumber\\
        &\leq \frac{\lambda}{2}\sum_{t=0}^{H-1}\E_{s\sim\mathbb{P}_t^{\pi_\lambda^*}(\cdot|s_0)}\Big\|\frac{Q_{t,\lambda}^{\pi_\theta}(s,\cdot)}{\lambda}-\theta_{s,\cdot}-\frac{\sum_a Q_{t,\lambda}^{\pi_\theta}(s,a)/\lambda-\theta_{s,a}}{|\gA|}\mathbf{1}\Big\|_\infty^2
    \end{align}
    where $\mathbf{1}\in\mathbb{R}^{|\gA|}$ is an all-one vector and the last inequality follows from \citep[Lemma 27]{mei2020global}.

    Following the derivation of Lemma \ref{lemma:pg}, it is straightforward to verify that \eqref{lpg:eq0} holds with $Q_{t,\lambda}^{\pi_\theta}(s,a)-\lambda\log\pi_\theta(a|s)$ in place of the advantage $A_{t,\lambda}^{\pi_\theta}$:
    \begin{align}\label{ent bound:eq4}
        \nabla V_\lambda^{\pi_\theta}(\gD) 
        &= \sum_{t=0}^{H-1}\sum_s \mathbb{P}_t^{\pi_\theta}(s)\sum_a \pi_\theta(a
        |s)\nabla\log \pi_\theta(a|s)\big(Q_{t,\lambda}^{\pi_\theta}(s,a)-\lambda\log\pi_\theta(a|s)\big) \nonumber\\
        &=\sum_{t=0}^{H-1}\sum_s \mathbb{P}_t^{\pi_\theta}(s)\sum_a \nabla\pi_\theta(a
        |s)\big(Q_{t,\lambda}^{\pi_\theta}(s,a)-\lambda\log\pi_\theta(a|s)\big) \nonumber\\
        &=\sum_{t=0}^{H-1}\sum_s \mathbb{P}_t^{\pi_\theta}(s)\sum_a \nabla\pi_\theta(a
        |s)\big(Q_{t,\lambda}^{\pi_\theta}(s,a)-\lambda \theta_{s,a}+\lambda\sum_{a}\exp\theta_{s,a}\big) \nonumber\\
        &=\sum_{t=0}^{H-1}\sum_s \mathbb{P}_t^{\pi_\theta}(s)\sum_a \nabla\pi_\theta(a
        |s)\big(Q_{t,\lambda}^{\pi_\theta}(s,a)-\lambda \theta_{s,a}\big) 
    \end{align}
    where third equality follows from the $\pi_\theta(a|s)$ is a softmax function, and the last equality is due to the fact that 
    $$\sum_a \nabla\pi_\theta(a
        |s)\sum_{a}\exp\theta_{s,a}=\sum_{a}\exp\theta_{s,a} \nabla\sum_a\pi_\theta(a
        |s)=\sum_{a}\exp\theta_{s,a}\nabla 1 = 0.$$
    Then from \eqref{ent bound:eq4}, we have
    \begin{align}\label{ent bound:eq 7}
        \frac{\partial V_\lambda^{\pi_\theta}(\gD) }{\partial \theta_{s,\cdot}}
        &= \sum_{t=0}^{H-1}\mathbb{P}_t^{\pi_\theta}(s)\sum_a \frac{\partial \pi_\theta(a|s)}{\partial \theta_{s,\cdot}}\big(Q_{t,\lambda}^{\pi_\theta}(s,a)-\lambda \theta_{s,a}\big) \nonumber\\
        &= \sum_{t=0}^{H-1}\mathbb{P}_t^{\pi_\theta}(s) \frac{\partial \pi_\theta(\cdot|s)}{\partial \theta_{s,\cdot}}\big(Q_{t,\lambda}^{\pi_\theta}(s,\cdot)-\lambda \theta_{s,\cdot}\big).
    \end{align}
    where the first equality is due to the fact that $\partial \pi_\theta(a|s')/\partial \theta_{s,\cdot}=0$ for any $s'\neq s$, and the last equality follows from a matrix-vector product rewriting.
        
Define $\gS(s_0)\subseteq\gS$ as the set of all possible states starting from $s_0$, i.e., $\gS(s_0)=\{s_0\}\cup\{s_t \in \gS |t\in\{1,...,H-1\},a_{t-1}\in\gA,s_{t-1}\in\gS(s_0),s_t=\mathcal{P}(s_{t-1},a_{t-1})\}$. Then we have
    \begin{align}\label{ent bound:eq3}
    \|\nabla V_\lambda^{\pi_\theta}(\gD)\| 
    &\geq\Big(\sum_{s\in\gS(s_0)}\big\|\frac{\partial V_\lambda^{\pi_\theta}(\gD)}{\partial\theta_{s,\cdot}}\big\|^2\Big)^{0.5} \nonumber\\
    &\geq\frac{1}{\sqrt{|\gS(s_0)|}}\sum_{s\in\gS(s_0)}\Big\|\frac{\partial V_\lambda^{\pi_\theta}(\gD)}{\partial\theta_{s,\cdot}}\Big\| \nonumber\\
    &= C_d\sum_{s\in\gS(s_0)} \sum_{t=0}^{H-1}\mathbb{P}_t^{\pi_\theta}(s)\Big\| \frac{\partial \pi_\theta(\cdot|s)}{\partial \theta_{s,\cdot}}\big(Q_{t,\lambda}^{\pi_\theta}(s,\cdot)-\lambda \theta_{s,\cdot}\big)\Big\| 
    \end{align}
    where the second and the third inequalities follow from Cauchy-Schwartz inequality,
    and the last inequality follows from \eqref{ent bound:eq 7}. The constant $C_d=\frac{1}{\sqrt{|\gS(s_0)|}}$.

    Continuing from \eqref{ent bound:eq3}, it follows similar to the derivations in (533)--(536) in \citep{mei2020global} that
    \begin{align}
        &\|\nabla V_\lambda^{\pi_\theta}(\gD)\| \nonumber\\
         &\geq C_d\sum_{s\in\gS(s_0)} \sum_{t=0}^{H-1}\mathbb{P}_t^{\pi_\theta}(s)\min_a \pi_\theta(a|s)\Big\| Q_{t,\lambda}^{\pi_\theta}(s,\cdot)-\lambda \theta_{s,\cdot}-\frac{\sum_a Q_{t,\lambda}^{\pi_\theta}(s,a)-\lambda\theta_{s,a}}{|\gA|}\Big\|_\infty
    \end{align}
    Then we have
    \begin{align}\label{ent bound: eq 6}
        &\|\nabla V_\lambda^{\pi_\theta}(\gD)\|^2 \nonumber\\
         &\geq C_d^2 \sum_{s\in\gS(s_0)} \sum_{t=0}^{H-1}(\mathbb{P}_t^{\pi_\theta}(s) \min_a \pi_\theta(a|s))^2\Big\| Q_{t,\lambda}^{\pi_\theta}(s,\cdot)-\lambda \theta_{s,\cdot}-\frac{\sum_a Q_{t,\lambda}^{\pi_\theta}(s,a)-\lambda\theta_{s,a}}{|\gA|}\Big\|_\infty^2 \nonumber\\
         &= C_d^2 \lambda^2 \sum_{s\in\gS(s_0)} \sum_{t=0}^{H-1} \frac{(\mathbb{P}_t^{\pi_\theta}(s) \min_a \pi_\theta(a|s))^2}{\mathbb{P}_t^{\pi_\lambda^*}(s|s_0)}\mathbb{P}_t^{\pi_\lambda^*}(s|s_0)\Big\| Q_{t,\lambda}^{\pi_\theta}(s,\cdot)/\lambda- \theta_{s,\cdot}\nonumber\\
         &~~~-\frac{\sum_a Q_{t,\lambda}^{\pi_\theta}(s,a)/\lambda-\theta_{s,a}}{|\gA|}\Big\|_\infty^2 \nonumber\\
         &\geq \lambda^2 C_\lambda^{\pi_\theta}(s_0)\sum_{s\in\gS(s_0)} \sum_{t=0}^{H-1}  \mathbb{P}_t^{\pi_\lambda^*}(s|s_0)\Big\| Q_{t,\lambda}^{\pi_\theta}(s,\cdot)/\lambda- \theta_{s,\cdot}-\frac{\sum_a Q_{t,\lambda}^{\pi_\theta}(s,a)/\lambda-\theta_{s,a}}{|\gA|}\Big\|_\infty^2 \nonumber\\
         &=\lambda^2 C_\lambda^{\pi_\theta}(s_0) \sum_{t=0}^{H-1}  \E_{s\sim\mathbb{P}_t^{\pi_\lambda^*}(\cdot|s_0)}\Big\| Q_{t,\lambda}^{\pi_\theta}(s,\cdot)/\lambda- \theta_{s,\cdot}-\frac{\sum_a Q_{t,\lambda}^{\pi_\theta}(s,a)/\lambda-\theta_{s,a}}{|\gA|}\Big\|_\infty^2
    \end{align}
    where
    \begin{align*}
         C_\lambda^{\pi_\theta}(s_0)
         &= C_d^2  \min_{t,s\in\gS(s_0)}\frac{(\mathbb{P}_t^{\pi_\theta}(s) \min_a \pi_\theta(a|s))^2}{\mathbb{P}_t^{\pi_\lambda^*}(s|s_0)}.
    \end{align*}
    Combining \eqref{ent bound: eq 6} and \eqref{ent bound:eq2} gives
    \begin{align}
        V_\lambda^{\pi_\lambda^*}(s_0)- V_\lambda^{\pi_\theta}(s_0) \leq \frac{1}{2\lambda}\frac{1}{C_\lambda^{\pi_\theta}(s_0)}\|\nabla V_\lambda^{\pi_\theta}(\gD)\|^2
    \end{align}
    which completes the proof.
\end{proof}

\subsection{Toy verification in Figure \ref{fig:toy example}}\label{appendix:toy example}
To verify the claim made after Proposition \ref{prop:max ent bound}, we will need to vary the sparsity of optimal tokens and observe experimental results.
However, it is generally difficult to control the sparsity of optimal responses for real queries. Thus the verification experiments reported in Figure \ref{fig:toy example} are conducted in a synthetic task, and we leave the results on non-synthetic tasks to Section \ref{sec:experiments}. 

\textbf{Task setting.} In the synthetic task, the total number of actions is $|\gA|=10^5$, where the number of optimal action $n_{\rm opt}$ can be picked from $\{15,10,5,1\}$ and the number of suboptimal action is fixed at $500$.  The reward for optimal action is $1$, for suboptimal action is $0.2$ and is $0$ for all other actions. For simplicity, we set $H=1$. The policy is given by a tabular softmax with parameter $\theta\in \mathbb{R}^{10^5}$. 
To mimic a pre-trained initial policy, we initialize the policy parameter corresponding to the optimal actions and $500$ other random actions from $\mathcal{N}(1,1)$; while we initialize all other logits from $\mathcal{N}(0,1)$. 
Results in the figure are averaged over 20 independent runs.

\textbf{Hyper-parameters.} The learning rate is set as $0.02$, and the batch size is $64$. The hyper-parameters for each algorithm are found through a grid search. For $n_{\rm opt}=15,10,5,1$: the regularization coefficient for entropy regularization is set as $0.0005,0.0005,0.0005,0.0007$, and the coefficient for clamped entropy regularization is set as $0.0008$ uniformly. The clamping percentage $p$ is set as $0.98,0.98,0.985,0.997$.

\subsection{Additional experiments}\label{appendix:additional expriment}
In this section, we report some additional experimental results. 

\textbf{Experimental details.} The algorithms are tested on Qwen2.5-math-7b base model on $6144$ samples from the DeepMath dataset \citep{he2025deepmath}. We randomly select queries with over-long filtering (no more than $1024$ tokens) from the dataset.
The evaluation method is consistent with that in Section \ref{sec:training details}.
We use AdamW optimizer with a learning rate of $1\times10^{-5}$. We set the max response length as $3072$. We use a batch size of $512$, and for each query we roll out $8$ responses. For AEnt, we use the GRPO objective as $\gL_{\rm PO}$. We use a clamping percentage $p=0.33$, and set $\tilde{\gH}_{\rm low}\!=\!0$ and $\tilde{\gH}_{\rm high}\!=\!0.3$. We use an initial entropy coefficient of $0.002$, and start updating the coefficient from the third epoch with $\beta=0.001$. We clip the coefficient in between $0.0006$ and $0.005$. For EntReg method, we use the traditional entropy bonus with an entropy coefficient of $0.002$.

\begin{figure*}[t]
\centering
    \includegraphics[width=0.325\textwidth]{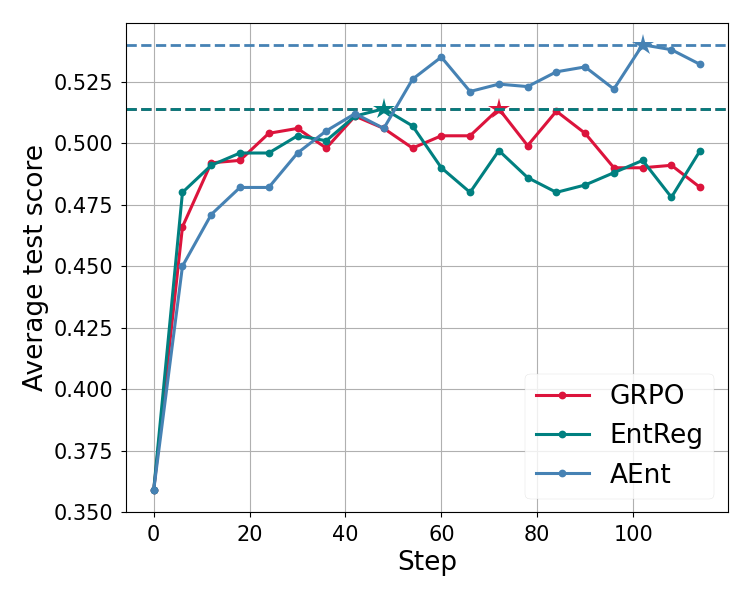}
     \includegraphics[width=0.325\textwidth]{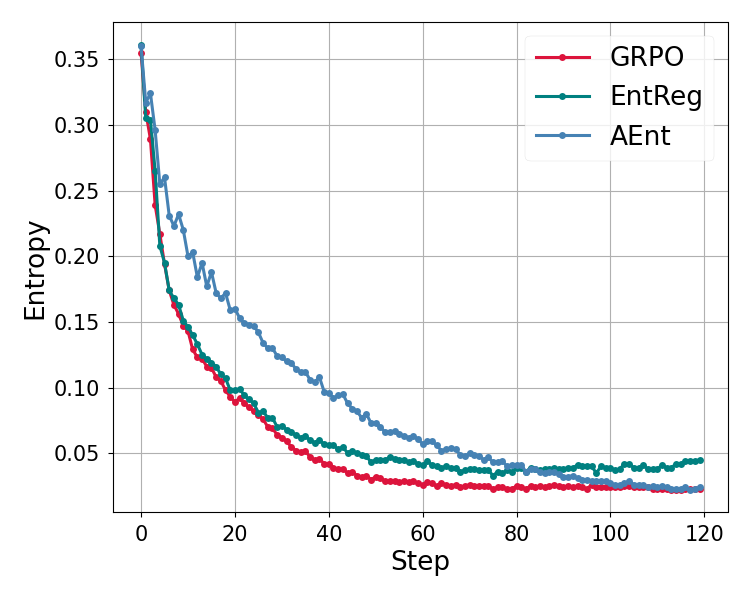}
      \includegraphics[width=0.325\textwidth]{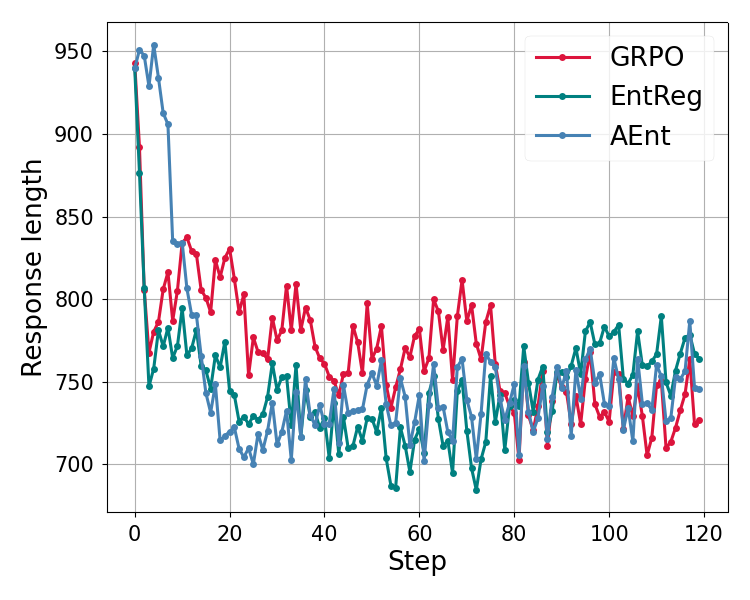}
    \vspace{-0.1cm}
    \caption{Results of training Qwen2.5-Math-7B on 6k samples from the DeepMath dataset.}
    \label{fig:performance comparison q7b}
    \vspace*{-0.15cm}
\end{figure*}
\begin{table*}[t]
\centering
    \resizebox{0.82\textwidth}{!}{%
    \setlength{\tabcolsep}{3pt}
    \begin{tabular}{lc|c|c|c|c|c} \toprule
  & \multicolumn{1}{c}{\textbf{MATH-Hard}} & \multicolumn{1}{c}{\textbf{MATH-500}}  & \multicolumn{1}{c}{\textbf{AIME24}} & \multicolumn{1}{c}{\textbf{Minerva}} & \multicolumn{1}{c}{\textbf{Olympiad}} & \multicolumn{1}{c}{\textbf{AMC}}  \\
    \cmidrule(r){1-1} \cmidrule(lr){2-2}\cmidrule(lr){3-3}\cmidrule(lr){4-4}    \cmidrule(lr){5-5} \cmidrule(lr){6-6} \cmidrule(lr){7-7}
    \textbf{Base} &  0.443 & 0.626 & 0.183 & 0.143 & 0.290 & 0.469 \\ 
    \textbf{GRPO} & 0.622  & 0.808  & 0.250 & 0.358 & 0.412 & 0.631 \\
    \textbf{EntReg} & 0.620 & 0.810 & 0.214 & 0.365 & 0.437 & \textbf{0.644} \\
    \textbf{AEnt} & \textbf{0.657} & \textbf{0.828}  & \textbf{0.258} &\textbf{0.379} &\textbf{0.493} & 0.637 \\
    \bottomrule
    \end{tabular}}
    \caption{Benchmark scores of training Qwen2.5-Math-7B on 6k samples from the DeepMath dataset. \textbf{Bold} numbers indicate the best result on the benchmark.\vspace{-0.05cm}}
    \label{table:performance comparison q7b}
\end{table*}

\textbf{Observations.} The results are reported in Figure \ref{fig:performance comparison q7b} and Table \ref{table:performance comparison q7b}. The performance observation of Figure is overall consistent with that in Section \ref{sec:experiments performance analysis}. It can be observed that AEnt is able to outperform the baselines (left plot) potentially by preventing an early entropy collapsing (middle plot).  The response length of the algorithms is overall similar towards the late training period. AEnt's training reasoning efficiency is thus similar to the baselines in this test.
\end{document}